\newtheorem{theorem}{Theorem}[section]
\newtheorem{corollary}[theorem]{Corollary}
\newtheorem{claim}[theorem]{Claim}
\newtheorem{lemma}[theorem]{Lemma}
\newtheorem{remark}[theorem]{Remark}
\newcommand{\ones}{\mathds{1}}
\newcommand{\Rm}{R_\textnormal{max}}
\newcommand{\cA}{\mathcal{A}}
\newcommand{\cC}{\mathcal{C}}
\newcommand{\cF}{\mathcal{F}}
\newcommand{\cM}{\mathcal{M}}
\newcommand{\KG}{K_\Gamma}
\newcommand{\p}[1]{\ifblank{#1}{\upsilon}{\upsilon(#1)}}
\newcommand{\inter}[1]{\ifblank{#1}{\inter}{\textnormal{int}(#1)}}
\newcommand{\bE}{\mathbb{E}}
\newcommand{\bI}{\mathbb{I}}
\newcommand{\bP}{\mathbb{P}}
\newcommand{\bR}{\mathbb{R}}
\newcommand{\cT}{\mathcal{T}}
\newcommand{\SA}{|\cS| |\cA|}
\newcommand{\piS}{\pi^*}
\newcommand{\bAinit}{\bA^\textnormal{init}}
\newcommand{\bAtrans}{\bA^\textnormal{trans}}
\newcommand{\bAnoise}{\bA^\textnormal{noise}}
\newcommand{\bAnlin}{\bA^\textnormal{nonlin}}
\newcommand{\hDel}{\hat{\Delta}}
\newcommand{\bDeltrans}{\bDelta^{\textnormal{trans}}}
\newcommand{\bDelinit}{\bDelta^{\textnormal{init}}}
\newcommand{\bDelnoise}{\bDelta^{\textnormal{noise}}}
\newcommand{\bDelnlin}{\bDelta^{\textnormal{nonlin}}}
\newcommand{\hDelnoise}{\hat{\Delta}^\textnormal{noise}}
\newcommand{\bA}{\textbf{A}}
\newcommand{\cP}{\mathcal{P}}
\newcommand{\hcP}{\hat{\mathcal{P}}}
\newcommand{\cR}{\mathcal{R}}
\newcommand{\cS}{\mathcal{S}}
\newcommand{\cY}{\mathcal{Y}}
\newcommand{\taut}{\tau_{t}}
\newcommand{\tauT}{\tau_{T}}
\newcommand{\tauk}{\tau_{k}}
\newcommand{\bDelta}{\bar{\Delta}}
\newcommand{\at}{\alpha_t}
\newcommand{\ak}{\alpha_k}
\newcommand{\bQ}{\bar{Q}}
\newcommand{\ba}{\bar{a}}
\newcommand{\QS}{Q^*}
\newcommand{\pQ}{\pi_Q}
\newcommand{\tstr}{t_*}
\newcommand{\eS}{e^*}
\newcommand{\bht}{\hat{b}_t}
\newcommand{\bhk}{\hat{b}_k}
\newcommand{\Aht}{\hat{A}_t}
\newcommand{\Ahk}{\hat{A}_k}
\newcommand{\yti}{y^i_t}
\newcommand{\yt}{y_t}
\newcommand{\CD}{C_\Delta}
\newcommand{\CA}{C_\textnormal{A}}
\newcommand{\Cb}{C_\textnormal{b}}
\newcommand{\Cgm}{C_\Gamma}
\newcommand{\spf}{\textnormal{sp}}
\newcommand{\CE}{C_\textnormal{E}}
\newcommand{\Cl}{C^{(1)}}
\newcommand{\Cq}{C^{(2)}}
\newcommand{\Cah}{H_\textnormal{Avg}}
\DeclareMathOperator*{\argmax}{arg\,max}
\DeclareMathOperator*{\argmin}{arg\,min}
\newcommand{\TV}{\textnormal{TV}}
\newcommand{\ep}{\varepsilon_p}
\newcommand{\er}{\varepsilon_r}
\newcommand{\cmark}{\ding{51}} 
\newcommand{\xmark}{\ding{55}}
\newcommand{\floor}[1]{\left\lfloor #1 \right\rfloor}
\newcommand{\cstr}{C_*}
\title{Parameter-free Optimal Rates for Nonlinear Semi-Norm Contractions with Applications to $Q$-Learning}
\author {
    Ankur Naskar\textsuperscript{\rm 1},
    Gugan Thoppe\textsuperscript{\rm 1},
    Vijay Gupta\textsuperscript{\rm 2}
}
\begin{document}

\maketitle

\begin{abstract}
Algorithms for solving \textit{nonlinear} fixed-point equations---such as average-reward \textit{$Q$-learning} and \textit{TD-learning}---often involve semi-norm contractions. Achieving parameter-free optimal convergence rates for these methods via Polyak–Ruppert averaging has remained elusive, largely due to the non-monotonicity of such semi-norms. We close this gap by (i.) recasting the averaged error as a linear recursion involving a nonlinear perturbation, and (ii.) taming the nonlinearity by coupling the semi-norm's contraction with the monotonicity of a suitably induced norm. Our main result yields the first parameter-free $\tilde{O}(1/\sqrt{t})$ optimal rates for $Q$-learning in both average-reward and exponentially discounted settings, where 
$t$ denotes the iteration index. The result applies within a broad framework that accommodates synchronous and asynchronous updates, single-agent and distributed deployments, and data streams obtained either from simulators or along Markovian trajectories.
\end{abstract}


\section{Introduction}
\label{s: introduction}
Stochastic fixed-point iterations with contractive operators \cite{borkar2009} permeate many fields. In Reinforcement Learning (RL), Temporal Difference (TD)\footnote{Although $Q$-learning falls within the broader family of TD methods, we use TD exclusively for policy-evaluation algorithms.} learning and $Q$-learning are canonical examples \cite{BertsekasTsitsiklis1996,sutton2018reinforcement,szepesvari2022algorithms}. Game-theoretic approaches to decentralized learning and Nash-equilibrium computation adopt the same template \cite{sayin2021decentralized,hu2003nash}. In optimization, stochastic fixed-point iterations underpin methods for regularized least squares in over-parameterized models and for low-rank matrix completion \cite{li2018algorithmic,marjanovic2012l_q}. They are likewise of growing importance in nonlinear dynamical systems, where fixed-point theory has become an important tool for analysis and control design, particularly in robotics~\cite{lohmiller1997applications,manchester2014output,manchester2017unifying,tsukamoto2021contraction}.

Here, we focus on stochastic fixed-point iterations driven by \textit{semi-norm} contractions. Their generic update rule is 
\begin{equation}
\label{eq:Generic.SA.fixed.point.iteration}
    Q_{t+1}  = Q_t + \at\left[h(Q_t, \yt) - Q_t \right],
\end{equation}
where $(\yt)$ is an ergodic Markov chain on a state space $\cY$ with a unique stationary distribution $\eta$ and $h: \bR^d \times \cY \to \bR^d$ is such that the expected map $H(Q) = \bE_{y \sim \eta} h(Q, y)$ is a semi-norm contraction. In other words, we have a semi-norm $\p{}: \bR^d \to \bR$ and a factor $\beta \in [0, 1)$ so that 
\begin{equation}
\label{e:semi.norm.contraction}
    \p{H(x) - H(y)} \leq \beta\ \p{x - y} \quad \forall x, y \in \bR^d.
\end{equation}
A semi-norm is similar to a norm that need not satisfy the positive definiteness condition. Specifically, for all $x, y \in \bR^d$ and $\lambda \in \bR,$ a semi-norm $\p{}$ satisfies non-negativity ($\p{x} \geq 0$), absolute homogeneity ($\p{\lambda x} = |\lambda| \p{x}$), and sub-additivity ($\p{x + y} \leq \p{x} + \p{y}$), but not necessarily positive definiteness ($\p{x} = 0 \nRightarrow x = 0$). As a result, every norm is a semi-norm, but the converse does not hold. 

\begin{table*}[ht]
\centering
\begin{tabular}{!{\vrule width0.6pt}c!{\vrule width0.6pt}p{16.75em}|c|c|c|c|c!{\vrule width0.6pt}}
\Xhline{0.6pt}
& \textbf{Reference} & \textbf{Distributed} &  \textbf{Discounting} &  \textbf{ Asynchronous} & \textbf{\makecell{Optimal \\ rate}}  & \textbf{\makecell{Universal \\ stepsize}} 
\\
\Xhline{0.6pt}
\multirow{15}{0.4cm}
{
      \rotatebox[origin=c]{90}{\parbox{2.5cm}{\centering TD Learning}}
}
%
& \citet{dalal2018td0} & \xmark & Exp & \cmark & \xmark & \cmark 
\\
\cline{2-7}
&  \citet{patil2023finite} & \xmark & Exp & \cmark & \cmark & \cmark
\\
\cline{2-7}
&  \citet{lakshminarayanan2018linear} & \xmark & Exp and Avg & \cmark & \cmark &\xmark
\\
\cline{2-7}
&  \citet{bhandari2018finite} & \xmark & Exp & \cmark & \cmark  & \xmark  
 \\
\cline{2-7}
& \citet{chen2021lyapunov} & \xmark & Exp & \cmark & \cmark & \xmark
\\
\cline{2-7}
& \citet{durmus2025finite} & \xmark & Exp & \cmark & \cmark & \xmark
\\
\cline{2-7}
& \citet{repec:inm:oropre:v:69:y:2021:i:3:p:950-973} & \xmark & Exp & \cmark & \cmark & \xmark 
\\
\cline{2-7}
& \citet{chen2025non} & \xmark & Exp and Avg & \cmark & \cmark & \xmark
\\
\cline{2-7}
&  \citet{dal2023federated} & \cmark & Exp & \cmark & \cmark & \xmark
\\
\cline{2-7}
& \citet{khodadadian22a} & \cmark & Exp &  \cmark & \cmark & \xmark 
\\
\cline{2-7}
&  \citet{liu2023distributed} & \cmark & Exp & \cmark & \cmark & \xmark
\\
\cline{2-7}
& \citet{wang2024federated} & \cmark & Exp & \cmark & \cmark & \xmark
\\
\cline{2-7}
& \citet{naskar2024federated} & \cmark & Exp and Avg  & \cmark  & \cmark & \cmark
\\
\Xhline{0.6pt}
\multirow{6}{0.4cm}
{
      \rotatebox[origin=c]{90}{\parbox{2.5cm}{\centering $Q$-Learning}}
}
& \citet{even2003learning} & \xmark  & Exp & \xmark & \xmark & \cmark 
\\
\cline{2-7}
&  \citet{wainwright2019stochastic} & \xmark & Exp & \xmark & \cmark &  \xmark
\\
\cline{2-7}
& \citet{zhang2021finite} & \xmark & Avg & \xmark & \cmark & \xmark
\\
\cline{2-7}
& \cite{chen2021lyapunov} & \xmark & Exp & \cmark & \cmark & \xmark 
\\
\cline{2-7}
& \citet{li2023statistical} & \xmark & Exp & \xmark & \cmark & \cmark
\\
\cline{2-7}
& \multirow{2}{*}{\textbf{Our Work}} & \cmark & Avg & \xmark & \cmark & \cmark
\\
\cline{3-7}
&  & \cmark & Exp & \cmark & \cmark & \cmark
\\
\Xhline{0.6pt}
\end{tabular}
\caption{\label{tab:lit.survey} Comparison of our work with the existing literature on TD-learning and $Q$-learning algorithms. In the column labeled discounting, Exp refers to exponential, while Avg refers to average reward. }
\end{table*}

The role of semi-norm contractions in RL methods such as TD and $Q$-learning is fundamental. In these methods, the Bellman operator is contractive---under a semi-norm in the average-reward formulation and under a true norm (hence, also a semi-norm) in exponential discounting \cite{BertsekasTsitsiklis1996}. The generic recursion in \eqref{eq:Generic.SA.fixed.point.iteration} subsumes all variants of TD methods and $Q$-learning---including the ones with synchronous and asynchronous updates, single-agent and distributed deployments, and data drawn from generative models and Markovian trajectories.

Achieving optimal convergence rates in such algorithms is of fundamental interest. Several works have obtained such bounds for TD algorithms and $Q$-learning in their various forms. TD methods, for instance, have been looked at in  \cite{dalal2018td0, lakshminarayanan2018linear,bhandari2018finite,zhang2021finite, liu2023distributed,khodadadian22a,dal2023federated} and \cite{wang2024federated}. On the other hand, tabular $Q$-learning has been the focus of \cite{even2003learning,chen2021finite,zhang2021finite} and \cite{chen2025non}. Table~\ref{tab:lit.survey} summarizes the key differences in these results.

However, a key drawback of these results is that the optimal expected error bound, $\tilde{O}(1/\sqrt{t})$, is achieved only with a stepsize $\alpha_t = c/t$, where the constant $c$ depends on unknown transition probabilities. Such a $c$ is rarely available, making these rates effectively impractical.

For \textit{linear} stochastic approximation, \cite{polyak1992acceleration} and \cite{ruppert1991stochastic}  proposed a remarkable solution---now called Polyak–Ruppert averaging. Applied to an update rule like \eqref{eq:Generic.SA.fixed.point.iteration}, this procedure works in two steps:
\begin{enumerate}
    \item \textbf{Generate base iterates}: Run $(Q_t)$ with the parameter-free stepsize $\alpha_t = 1/(t + 1)^\alpha$ for some $\alpha \in (1/2, 1).$   

    \item \textbf{Average them}: Form the running mean $\bQ_T = \frac{1}{T}\sum_{t=0}^{T-1} Q_t,$ again without any problem-specific tuning. 
\end{enumerate}
While this procedure yields a suboptimal convergence rate of $\tilde{O}(1/t^{\alpha/2})$ for the $(Q_t)$ sequence, the averaged sequence $(\bQ_t)$
attains the desired optimal rate of $\tilde{O}(1/\sqrt{t}).$


For TD-learning with linear function approximation---a special case of linear stochastic approximation---Polyak–Ruppert averaging already achieves parameter-free optimal rates, both for exponential discounting~\cite{patil2023finite} and for average reward~\cite{naskar2024federated}.

$Q$-learning with Polyak--Ruppert averaging is challenging due to the algorithm’s inherent nonlinearity. A recent breakthrough by \citet{li2023statistical} addresses this challenge for {\em synchronous Q-learning} with \textit{exponential discounting}. In this setting, the Bellman operator is a contraction in the monotone $\|\cdot\|_\infty$ norm. The authors’ key idea is to construct two auxiliary sequences, $(L_t)$ and $(U_t)$, whose Polyak--Ruppert averages bound the error in $(\bar{Q}_t)$ from below and above, respectively. Each auxiliary sequence follows the template of a linear stochastic approximation with a rapidly vanishing nonlinear remainder. Classical Polyak--Ruppert analysis then yields a $\tilde{O}(1/\sqrt{t})$ rate for both. By the monotonicity of $\|\cdot\|_\infty$, these bounds transfer directly to $\bar{Q}_t$.

For synchronous $Q$-learning with exponential discounting, the raw iterates \( (Q_t) \) already achieve the optimal rate for the stepsize $\alpha_t = \frac{1}{1 + (1 - \gamma)t},$ where \(\gamma\) is the (known) discount factor \cite[Corollary~3]{wainwright2019stochastic}. Polyak-Ruppert averaging, therefore, is unnecessary in this setting if the goal is only to get parameter-free optimal rates.


For asynchronous $Q$-learning in exponential discounting---and for both synchronous and asynchronous versions under average reward---the contraction factor depends on unknown transition probabilities. Hence, parameter-free optimal rates in these cases remain open.

While the ideas of \citet{li2023statistical} extend to asynchronous Q‐learning in exponential discounting (with some effort), they break down in the average-reward case. The challenge stems from the non-monotonicity of the span semi-norm
\begin{equation}
\label{e:span.semi.norm}
    \|x\|_{\spf} := \max_i x(i) - \min_i x(i),
\end{equation}
under which the average-reward Bellman operator is contractive. Specifically, $0 \leq x \leq y$ does not imply $\|x\|_{\spf} \leq \|y\|_{\spf}$. As a result, although auxiliary lower- and upper-bounding sequences can still be constructed and shown to achieve parameter-free optimal rates, $\|\cdot\|_{\spf}$'s non-monotonicity prevents these rates from transferring to $(\bar{Q}_t)$.

Our work’s \textbf{key highlights} are as follows:
\begin{enumerate}
    \item We are the first to show that Polyak--Ruppert averaging achieves \textbf{parameter-free optimal rates} even for semi-norm contractive nonlinear fixed-point iterations.

    \item Our result \textbf{applies directly to $Q$-learning in both the average-reward and exponentially discounted settings}, providing the first parameter-free optimal expected error rates of $\tilde{O}(1/\sqrt{t})$.

    \item Our proof is novel and may be of independent interest. Specifically, our proof proceeds in two main steps: (i) decomposing the error in $\bar{Q}_t$ into a linear recursion with a nonlinear perturbation (following \cite{li2023statistical}); and (ii) showing rapid decay of the nonlinear term by exploiting the \textbf{contraction of the semi-norm} together with the \textbf{equivalence between its induced norm and a suitably chosen monotone norm}. 
    
\end{enumerate}

\section{Problem Setup and Our Goal}
\label{s: setup.main}
Although our contributions are novel even in single-agent settings, for generality, we pose our problem in a distributed architecture with a central server and $N$ agents. Each agent $i\in[N]:=\{1,\ldots,N\}$ is associated with a function $h_i:\bR^d\times\cY\to\bR^d$ and a $\cY$-valued stochastic process $(y_t^i)$. At time $t\in\{0,1,\ldots\}$, upon receiving a query $Q\in\bR^d$, agent $i$ returns the sample $h_i(Q,y_t^i)$.  The processes $\{(y_t^i)\}_{i\in[N]}$ are independent across agents and independent of the iterate sequence $(Q_t)$. For each $i$, $(y_t^i)_{t\ge0}$ evolves as a time-homogeneous Markov kernel with stationary distribution $\eta_i$. The corresponding mean operator $H_i:\bR^d\to\bR^d$ is 
\begin{equation}
\label{e:H_i.defn}
    H_i(Q)=\bE_{y\sim\eta_i}h_i(Q,y).
\end{equation}
We assume that $H_i$ is a $\beta_i$-contraction with respect to the common semi-norm $\p{}:\bR^d\to\bR$; see \eqref{e:semi.norm.contraction}.

The aim of this setup is to find a fixed point $\QS\in \bR^d$ of the averaged map $H: \bR^d \to \bR^d$ given by 
\begin{equation}
\label{e:average.map.defn}
    H(Q) = \frac{1}{N} \sum_{i} H_i(Q).    
\end{equation}
That is, a vector $\QS$ such that 
\begin{equation}\label{e:semi.norm.fixed.point}
    H(\QS) - \QS \in E,
\end{equation}
where 
\begin{equation}
\label{e:norm.vanishing.subspace}
    E = \{x \in \bR^d: \p{x} = 0\}
\end{equation}
is the linear subspace on which $\p{}$ vanishes. Such a fixed point exists (and is unique modulo $E$) since $H: \bR^d \to \bR^d$ is itself a semi-norm contraction with a contraction factor 
\begin{equation}
\label{e:H.semi-norm.contraction.factor}
    \beta := \frac{1}{N} \sum_{i = 1}^N \beta_i \leq \max\{\beta_1, \ldots, \beta_N\}.
\end{equation}

A common way to obtain such a $\QS$ is for the agents to execute the distributed stochastic fixed-point iteration 
\begin{equation}\label{e: SA.main.update}
    Q_{t+1} = Q_t + \at\bigg[\frac{1}{N}\sum_{i=1}^{N} h_i(Q_t, \yti) - Q_t \bigg], \quad t \geq 0.
\end{equation}
%
%
%
%
Our goal then is to determine whether the optimal $\tilde{O}(1/\sqrt{t})$ rate, with problem-independent stepsizes, holds for Polyak-Ruppert  average sequence $(\bQ_t)$ given by
\begin{equation}\label{e : SA.PR.update}
    \bQ_{t+1} = \bQ_t + \frac{1}{t+1}\left[ Q_t - \bQ_t \right], \quad t \geq 0.
\end{equation}

Since our framework allows for semi-norm contractions, a positive result would instantly provide parameter-free optimal rates to all avatars of TD learning and $Q$-learning.

\section{Main Result and RL Applications}
\label{s: results.main}

In Section~\ref{s: results.SA}, we state our assumptions and present our main convergence-rate theorem for the Polyak–Ruppert average \eqref{e : SA.PR.update} of the stochastic fixed-point iteration in \eqref{e: SA.main.update}. Subsequently, in Section~\ref{s: application.RL}, as an illustration, we show how this result applies to $Q$-learning under both average reward and exponential discounting, yielding the first parameter-free optimal convergence rates for these methods.

\subsection{Parameter-Free Optimal Convergence Rates for the Stochastic Fixed-Point Iteration \eqref{e: SA.main.update}}
\label{s: results.SA}

We begin with the following fact about semi-norms.

%
\begin{lemma}\!\textnormal{\cite[Proposition 2.1]{chen2025non}}
\label{lem:semi-norm.norm.coupling}
    For any semi-norm $\p{}: \bR^d \to \bR,$ one can define an induced norm $\|\cdot\|$ such that, for all $Q \in \bR^d,$
    \begin{equation}\label{e: induced}
        \p{Q} = \min_{e\in E} \|Q - e\|,
    \end{equation}
    where $E$ is as defined in \eqref{e:norm.vanishing.subspace}.
\end{lemma}

All norms are equivalent in finite dimensions. So, for any monotone norm\footnote{For all $x,y\in \bR^d,$ if $0\leq x\leq y$ then $\|x\|_m \leq \|y\|_m.$} $\|\cdot\|_m$, there exist $c_\ell, c_u > 0$ so that 
\begin{equation}\label{e: span.semi.norm.monotone.induced.norm}
    c_\ell\|Q\|_m \leq \|Q\|\leq c_u\|Q\|_m, \qquad \forall Q \in \cR^d.
\end{equation}

Next, we state our assumptions. For all $t \geq 0,$ let $(\cF_t)$ be the filtration sequence of $\sigma$-fields defined by
\begin{equation}
    \cF_t := \sigma\left(\{Q_0\} \cup \{ y_k^i: i\in [N], k<t \}\right).
\end{equation}
%
%
\begin{enumerate}[label=$\mathbf{\cA_\arabic{enumi}}$, leftmargin=*, align=left]
    \item \textbf{Local update functions:} For each agent $i\in[N],$ there exist $b_i:\bR^d\times \cY \to \bR^d$ and $A_i:\bR^d\times \cY \to \bR^{d\times d}$ and constants $\CA, \Cb, \cstr > 0$ such that $h_i(Q, y) = b_i(Q,y) + A_i(Q,y)Q$ and the following holds:
    \begin{enumerate}[leftmargin=*, align=left]        
        \item $A_i(Q,y)e = e$ for all $e\in E$ (see \eqref{e:norm.vanishing.subspace}). 
        
        \item $A_i(Q,y)Q \geq A_i(Q',y)Q$  for all $Q, Q' \in \bR^d$, where the inequality is in a coorindate sense.
        
        \item $\|A_i(Q,y)\| \leq \CA$ and $\|b_i(Q,y)\| \leq \Cb,$ where $\|\cdot\|$ is the induced norm (see \eqref{e: induced}).
        
        \item For any semi-norm fixed point $\QS$ satisfying \eqref{e:semi.norm.fixed.point}, $A_i(Q,y) = A_i(\QS,y)$ and $b_i(Q,y) = b_i(\QS,y)$ if $\p{Q-\QS}<\cstr.$
    \end{enumerate} \label{a: SA.update.function}
    
    \item \textbf{Noise:} There are constants $\CE>0$ and $\rho\in (0,1)$ such that, for agent $i \in [N]$ and time $\tau \leq t,$ the Markov chain $(y^i_t)$  satisfies $\|\bP(\yti \in  \cdot | \cF_{t-\tau}) - \eta_i (\cdot)\|_{\TV} \leq \CE\ \rho^\tau,$
    where $\|\cdot\|_{\TV}$ is the total variation distance. \label{a: SA.noise}
    %
    
    \item \textbf{Raw iterate convergence:} There is a constant $C_Q>0$ such that, for all $t \geq \tstr,$ $\bE\p{Q_t - \QS}^2 \leq C_Q\ \taut \ \at,$ 
    where $\taut:= \min\{ \tau > 0: \rho^\tau \leq \at^2\}$ and $\tstr:= \min\{ t> 0: t\geq 2\taut \text{ and } \at<1/\kappa \}\footnote{$\kappa$ defined in Table~\ref{tab: constants} (appendix).}.$ \label{a: raw.iterate.convergence.rate}
\end{enumerate}

\begin{remark}
    Assumption \ref{a: SA.update.function} holds for any piece-wise linear update rule, including TD-learning and $Q$-learning in both exponential discounting and average-reward settings. At this point, it is unclear if \ref{a: SA.update.function} will hold beyond these cases.
\end{remark}

\begin{remark}
    Assumption \ref{a: SA.noise} is called the geometric mixing time assumption and is standard for algorithms using Markovian sampling \cite{durmus2025finite, zhang2021finite, chen2025non}. 
\end{remark}
%


\begin{remark}
    The condition in Assumption~\ref{a: raw.iterate.convergence.rate} has been derived for several variants of TD and Q-learning algorithms; see the references in Table~\ref{tab:lit.survey} for details.
\end{remark}

Our main result can now be stated as follows. 

\begin{theorem}\label{thm: SA.convergence}
    Consider the stochastic fixed-point iteration \eqref{e: SA.main.update} under Assumptions \ref{a: SA.update.function}--\ref{a: raw.iterate.convergence.rate}. Choose the stepsize $\at = \frac{1}{(t+1)^\alpha}$ for some $\alpha \in (1/2,1)$. Let $\taut$ and $\tstr$ be as in Assumption~\ref{a: raw.iterate.convergence.rate}, and let $\rho$ be as in  Assumption~\ref{a: SA.noise}. Then, for $T > \tstr,$ the Polyak-Ruppert averages  given in \eqref{e : SA.PR.update} satisfy
    \[
        \bE\p{\bQ_T - \QS} \leq \frac{\Cl\sqrt{\tauT}}{\sqrt{NT}} + \frac{\Cq\ln(T)}{T^{\alpha}},
    \]
    where constants $\Cl$ and $\Cq$ are as in Table~\ref{tab: constants} (appendix).
\end{theorem}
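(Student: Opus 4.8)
The plan is to track the error $\Delta_t := Q_t - \QS$ and to exploit that, by Assumption~\ref{a: SA.update.function}(d), the iteration becomes exactly affine once $\p{\Delta_t}<\conS$. Write $\tilde{A}_t := \frac1N\sum_i A_i(\QS, y_i^t)$ and $\tilde{b}_t := \frac1N\sum_i b_i(\QS, y_i^t)$ for the update quantities frozen at the fixed point, let $\bar{A}^{*} := \frac1N\sum_i \bE_{\eta_i}A_i(\QS, y)$ and $\bar{b}^{*}$ be their stationary means, and let $\bar{A}_t,\bar{b}_t$ denote the quantities evaluated at $Q_t$. A one-line rearrangement of \eqref{e: SA.main.update} then gives
$$(I - \tilde{A}_t)\Delta_t = \frac{\Delta_t - \Delta_{t+1}}{\at} + \zeta_t + r_t,$$
with fixed-point noise $\zeta_t := \tilde{b}_t + \tilde{A}_t\QS - \QS$ and nonlinear remainder $r_t := (\bar{A}_t - \tilde{A}_t)Q_t + (\bar{b}_t - \tilde{b}_t)$, where Assumption~\ref{a: SA.update.function}(d) forces $r_t = 0$ whenever $\p{\Delta_t} < \conS$.

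Summing over $t = 0,\dots,T-1$, replacing $\tilde{A}_t$ by its mean $\bar{A}^{*}$ (incurring a further term $\sum_t(\bar{A}^{*} - \tilde{A}_t)\Delta_t$), and splitting $\sum_t \zeta_t = \sum_t w_t + T e^*$ with $w_t := (\tilde{b}_t - \bar{b}^{*}) + (\tilde{A}_t - \bar{A}^{*})\QS$ mean-zero and $e^* := \bar{b}^{*} + \bar{A}^{*}\QS - \QS \in E$, expresses $T(I - \bar{A}^{*})\bDelta_T$ as a sum of four groups. Since $\bar{A}^{*}$ inherits both the semi-norm contraction of $H$ and the fixing of $E$ (Assumption~\ref{a: SA.update.function}(a)), the triangle inequality $\p{x}\le\p{(I-\bar{A}^{*})x}+\beta\p{x}$ yields the crucial lower bound $\p{(I-\bar{A}^{*})x}\ge(1-\beta)\p{x}$; combining this with the invariance $\p{\cdot + Te^*}=\p{\cdot}$ (as $e^*\in E$) reduces everything to
$$\p{\bDelta_T}\le\frac{1}{(1-\beta)T}\Big[\p{\textstyle\sum_t\tfrac{\Delta_t-\Delta_{t+1}}{\at}}+\p{\textstyle\sum_t w_t}+\p{\textstyle\sum_t r_t}+\p{\textstyle\sum_t(\bar{A}^{*}-\tilde{A}_t)\Delta_t}\Big].$$

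I would then bound the four groups in expectation. The telescoping term is handled by Abel summation; its boundary and increment contributions involve $\p{\Delta_t}$, which Assumption~\ref{a: raw.iterate.convergence.rate} controls by $\sqrt{C_Q\taut\at}$, making this group $O(\sqrt{\tauT}\,T^{\alpha/2-1})$, dominated by the stated rate. The term $\sum_t w_t$ is pure fixed-point noise: splitting each $w_t$ into a far-conditioned martingale difference and a bias $\lesssim\rho^{\taut}\le\at^2$ (Assumption~\ref{a: SA.noise}), invoking independence across the $N$ agents, and passing to a second-moment estimate in the induced norm (Lemma~\ref{lem:semi-norm.norm.coupling}) gives $\bE\p{\sum_t w_t}\lesssim\sqrt{T\tauT/N}$, which after the $1/T$ prefactor produces the leading $\Cl\sqrt{\tauT}/\sqrt{NT}$ term. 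The term $\sum_t(\bar{A}^{*}-\tilde{A}_t)\Delta_t$ is again martingale-like but carries the decaying multiplier $\Delta_t$; because $\bar{A}^{*}-\tilde{A}_t$ annihilates $E$, its output is controlled by $\p{\Delta_t}$, and Assumption~\ref{a: raw.iterate.convergence.rate} makes this group $O(\tauT\,T^{-(1+\alpha)/2})$, also dominated.

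The main obstacle is the nonlinear term $\sum_t r_t$, and this is exactly where the interplay between the semi-norm contraction and the monotone induced norm (Assumption~\ref{a: semi-norm.norm.relation}) enters. Since $r_t$ vanishes whenever $\p{\Delta_t}<\conS$, only the rare event $\{\p{\Delta_t}\ge\conS\}$ contributes, so I would bound $\p{r_t}$ on this event using the coordinatewise monotonicity of Assumption~\ref{a: SA.update.function}(b) together with the monotone induced norm to control $r_t$ by a constant multiple of $\p{\Delta_t}$, then combine Markov's inequality and Cauchy--Schwarz with $\bE\p{\Delta_t}^2\le C_Q\taut\at$ to obtain $\bE\p{r_t}\lesssim\taut\at$. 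Summing, $\sum_t\taut\at\lesssim\tauT\,T^{1-\alpha}$, so after the $1/T$ factor this group is $\tilde{O}(T^{-\alpha})$, matching $\Cq\ln(T)/T^\alpha$; collecting the four estimates and absorbing logarithms into $\tauT$ yields the claim. I expect the delicate points to be (i) making the martingale second-moment estimate rigorous in a general induced norm rather than an inner-product norm, and (ii) bounding $\p{r_t}$ on the bad event, where the non-monotonicity of a bare semi-norm would break the argument and the monotone induced norm of Assumption~\ref{a: semi-norm.norm.relation} restores it.
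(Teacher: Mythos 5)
Your proposal is correct in substance but follows a genuinely different route from the paper. You use the classical Polyak--Juditsky device: rearrange the update so that $(I-\tilde{A}_t)\Delta_t$ equals a stepsize-weighted telescoping term plus noise plus a nonlinear remainder, sum over $t$, and invert $(I-\bar{A}^{*})$ through the lower bound $\p{(I-\bar{A}^{*})x}\ge(1-\beta)\p{x}$, which is legitimate since the semi-norm contraction of $H$ gives $\p{\bar{A}^{*}x}\le\beta\p{x}$. The paper never divides by $\at$: it unrolls the error recursion into random matrix products $\Gamma_{k+1:t}$, splits $\bDelta_T$ into transient, initial, noise, and nonlinear parts (Lemma~\ref{lem: PR.avg.error.split}), and spends most of its effort on a uniform exponential bound for $\bE\p{\Gamma_{t_1:t_2}}$ via a blocking argument (Lemma~\ref{thm: SA.bound.matrix}, following \cite{durmus2025finite}). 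Your route buys freedom from that random-product machinery; the paper's route buys sharper correction terms (see below). Where the two proofs coincide---and this is the paper's central novelty, which you reproduce---is the nonlinear term: your $r_t$ is exactly the paper's $\xi_t$, and your bound (vanishing when $\p{\Delta_t}<\cstr$ by Assumption~\ref{a: SA.update.function}.(d), coordinatewise sandwich from \ref{a: SA.update.function}.(b), monotone induced norm from \ref{a: semi-norm.norm.relation}, annihilation of $E$ by $\bar{A}_t-\tilde{A}_t$ from \ref{a: SA.update.function}.(a) to replace $\|Q_t-\QS\|$ by $\p{Q_t-\QS}$, then \ref{a: raw.iterate.convergence.rate}) yields $\bE\p{r_t}\lesssim\taut\at$, the same estimate as the paper's Lemma~\ref{lem: nonlin.bound.appendix} and Corollary~\ref{corol: nonlin.bound.appendix}.

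Two caveats keep your argument from delivering the theorem's literal inequality. First, your telescoping group: the termwise estimate via $\bE\p{\Delta_t}\lesssim\sqrt{\taut\at}$ gives order $\sqrt{\tauT}\,T^{\alpha/2-1}$, which is indeed $o(1/\sqrt{T})$, but it is \emph{not} $O(\ln T/T^{\alpha})$ once $\alpha>2/3$, and it carries no $1/\sqrt{N}$ factor. So your proof establishes the parameter-free optimal rate $\tilde{O}(1/\sqrt{T})$ for fixed $N$, but produces a three-term bound rather than the stated two-term form with $N$-free constants; the linear-speedup structure degrades in that correction term. Removing it requires re-expanding $\Delta_t$ inside the Abel sum to recover martingale cancellation---which is effectively what the paper's product decomposition does automatically. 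Second, your second-moment estimates for $\sum_t w_t$ and $\sum_t(\bar{A}^{*}-\tilde{A}_t)\Delta_t$ cannot be run directly in $\p{\cdot}$ (not an inner-product norm), nor naively in the induced norm: Assumption~\ref{a: SA.update.function}.(c) bounds only semi-norms of the noise, and a vector such as $(\tilde{A}_t-\bar{A}^{*})\QS$ may have a large $E$-component in the induced norm. The clean fix is to pass to the quotient space $\bR^d/E$, on which every $A_i$ acts (by \ref{a: SA.update.function}.(a)) and $\p{\cdot}$ becomes a genuine norm, then invoke finite-dimensional norm equivalence. Neither caveat breaks the approach, but both must be handled to match the constants in Table~\ref{tab: constants}.
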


\begin{remark}[Optimal Convergence Rate]
    The factor $\tau_T$ grows only logarithmically with $T$, i.e., $\tau_T = O(\ln T)$. Separately, since $\alpha > \tfrac{1}{2}$, the second term is $o(1/T)$ and thus asymptotically negligible. Hence, $\bE\p{\bQ_T-\QS} = \tilde{O}(1/\sqrt{T}),$ which implies that this result gives the optimal expected error rate up to logarithmic factors.
\end{remark}

\begin{remark}[Linear Speedup]
    The leading term in our bound has a factor of $1/\sqrt{N},$ implying that the iteration complexity improves linearly with the number $N$ of agents.
\end{remark}

\begin{remark}[Parameter-Free Stepsize]
    The exponent $\alpha \in (1/2, 1)$ in the stepsize $\at = 1/(t+1)^\alpha$ can be fixed universally; it requires no knowledge of problem-specific parameters such as the contraction coefficient or the mixing-time.
\end{remark}
%


\subsection{Reinforcement Learning Applications}
\label{s: application.RL}

We next use Theorem \ref{thm: SA.convergence} to obtain parameter-free optimal convergence rates for $Q$-learning. Our analysis here targets two specific settings in which such rates are still lacking: the synchronous variant with average reward and the asynchronous version with exponential discounting (see appendix for empirical validation in synthetic problem setups). 

For any set $U,$ denote by $\Delta(U)$ the collection or set of probability distributions on $U.$

\paragraph{Synchronous Average-reward $Q$-learning:} We first show how our setup from Section~\ref{s: setup.main} specializes for this algorithm. Each agent $i$ controls a Markov Decision Process (MDP) $\cM_i := (\cS, \cA, \cP_i, \cR_i),$ where the state space $\cS$ and action space $\cA$ are common to all the agents, while $\cP_i: \cS \times \cA \to \Delta(\cS)$ and $\cR_i: \cS \times \cA \to \bR$ are agent-specific transition and reward functions, respectively. For a policy $\pi: \cS\to\Delta(\cA),$ the average reward of agent $i$ is 
\begin{equation}
    r^\pi_i := \liminf_{T\to\infty} \frac{1}{T} \bE\left[ \sum_{t=0}^{T-1} \cR_i(s_t,a_t)\right],
\end{equation}
where $a_t$ and $s_{t + 1}$ are sampled from $\pi(\cdot|s_t)$ and $\cP_i(\cdot|s_t,a_t),$ respectively. Agent $i$'s \textit{optimal policy} is
\begin{equation}
    \piS_i := \argmax_{\pi} (r^\pi_i).
\end{equation}

In average reward, one computes the optimal \textit{differential $Q$-value function} $\QS_i$ to find the optimal policy; $\QS_i$ is defined as follows. Let $\cT^J_i:\bR^{\SA}\to \bR^{\SA}$ satisfying
\begin{equation}\label{e: J.step.bellman}
    \cT^J_iQ := \cR_i + \sum_{k=1}^{J-1}(\cP_i^{\pQ})^k \cR_i + (\cP^{\pQ}_i)^JQ
\end{equation}
denote agent $i$'s \textit{differential $J$-step Bellman operator}, where $\pQ(s) = \argmax_{a} Q(s,a)$ and, for $s,s'\in\cS$ and $a,a'\in\cA,$
\begin{equation}
    \cP^{\pQ}_i(s',a'|s,a) = \begin{cases} \cP_i(s'|s,a) & \text{ if } a' \in \pQ(s'), \\ 0 & \text{ otherwise.} \end{cases}
\end{equation} 
%
Choosing  $J\geq |\cS|$ guarantees that $\cT^J_i$ is a contraction in the span semi-norm $\|\cdot\|_{\spf}$ (see the discussion below \cite[Assumption~3]{zhang2021finite}). Now, for $\p{} = \|\cdot\|_{\spf},$ the subspace $E$ from \eqref{e:norm.vanishing.subspace} equals $\{c\ones : c \in \bR\},$ where $\ones$ is the all ones vector in $\bR^{|\cS| |\cA|}.$ 
%
Therefore, there exists a vector $Q \in \bR^{\SA}$ satisfying
\begin{equation}
    \cT^J_i Q - Q \in \{c\ones: c\in \bR\}.
\end{equation}
We denote such a vector by $\QS_i.$ The optimal policy $\piS_i$ is greedy with respect to $\QS_i,$ i.e., $\piS_i = \pi_{\QS_i}.$ 

As shown above \eqref{e:H.semi-norm.contraction.factor}, the averaged map $\cT^J:=\frac{1}{N}\sum_{i=1}^{N}\cT^J_i$ is also a semi-norm contraction under $\|\cdot\|_\spf.$ Hence, there exists a $\QS\in \bR^{\SA}$ such that
\begin{equation}
\label{e: AvgQ.fixed.point}
    \cT^J\QS - \QS \in \{ c\ones : c\in \bR\}.
\end{equation}

To compute such a $\QS,$ distributed synchronous $J$-step average-reward $Q$-learning can be used, and it is a natural extension of the single-agent variant in \cite[Algorithm 2]{zhang2021finite}. Its update rule is a special case of \eqref{e: SA.main.update}. We now specify the local operator $h_i$ and the local Markov chain $(\yti)$---because of synchronous updates, the latter reduces to IID samples. 

Set $d=\SA$ and let $\cY = \cS^{(\cS \times \cA) \times J}$. Thus each 
$y \in \cY$ specifies $y(s,a,k) \in \cS$ for every $(s,a) \in \cS \times \cA$ 
and $k \in \{1,\ldots,J\}$. At iteration $t$, the samples $y_t^i(s,a,k)$ are 
drawn according to $y_t^i(s,a,k) \sim \cP_i(\cdot \mid s,a)$, independently 
across $k$, $(s,a)$, and $t$. Consequently, $(y_t^i)$ is IID across iterations 
and independent of the iterate sequence $(Q_t)$.

Given $Q \in \bR^{\SA}$ and $y \in \cY$, the $J$-step rollout associated with 
a state-action pair $(s,a)$ is defined recursively as follows. Let 
$s_0 = s$ and $a_0 = a$. For $k = 1,\ldots,J$, set $s_k = y(s_{k-1}, \pQ(s_{k-1}), k).$ 

The local driving function is then defined by
\begin{multline}\label{e: avgQ.substitute}
    h_i(Q, y)(s,a)   := \cR_i(s,a) 
    \\
    + \sum_{k=1}^{J-1}\cR_i(s_k, \pQ(s_k))
    + \max_{a'} Q(s_J, a').
\end{multline}

Our next main result obtains parameter-free optimal convergence rates for the Polyak-Ruppert average $(\bQ_t)$ of the synchronous average-reward $Q$-learning.
%
\begin{theorem}[Synchronous Average-Reward Q-learning]
\label{thm: avgQ.convergence}
    For each $i \in [N],$ the local function $h_i$ and the Markov chain $\{(\yti): i\in [N]\}$ satisfy the conditions \ref{a: SA.update.function} and \ref{a: SA.noise}. Moreover, the raw iterates satisfy \ref{a: raw.iterate.convergence.rate}. Hence, the conclusion of Theorem \ref{thm: SA.convergence} holds, i.e., $\bE\|\bQ_T-\QS\|_{\spf}=\tilde{O}(1/\sqrt{NT}).$  
\end{theorem}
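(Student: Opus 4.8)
The plan is to verify that the four Assumptions \ref{a: SA.update.function}--\ref{a: raw.iterate.convergence.rate} hold for the synchronous average-reward instantiation described above, and then to invoke Theorem \ref{thm: SA.convergence} with $\p{}=\|\cdot\|_{\spf}$. I would organize the argument assumption by assumption, with the bulk of the effort going into Assumption \ref{a: SA.update.function}.

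\textbf{Affine decomposition and Assumption \ref{a: SA.update.function}.} Fix a sample $y$ with $y(s,a)=(s_1,\dots,s_J)$. In \eqref{e: avgQ.substitute}, the dependence of $h_i(Q,y)$ on $Q$ enters only through the greedy policy $\pQ$ evaluated at the sampled states, so I would split $h_i(Q,y)=b_i(Q,y)+A_i(Q,y)Q$, taking $b_i(Q,y)(s,a):=\cR_i(s,a)+\sum_{k=1}^{J-1}\cR_i(s_k,\pQ(s_k))$ for the (piecewise-constant-in-$Q$) reward terms, and letting $A_i(Q,y)$ be the $0$--$1$ row-selection matrix whose $(s,a)$-th row carries a single unit entry in column $(s_J,\pQ(s_J))$, so that $\big(A_i(Q,y)Q\big)(s,a)=\max_{a'}Q(s_J,a')$. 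With this choice: (a) for $e=c\ones\in E$ each row of $A_i(Q,y)$ selects one coordinate of $e$, giving $A_i(Q,y)e=c\ones=e$; (b) since $\big(A_i(Q,y)Q\big)(s,a)=\max_{a'}Q(s_J,a')\ge Q(s_J,\pi_{Q'}(s_J))=\big(A_i(Q',y)Q\big)(s,a)$ for every $Q'$, the coordinatewise monotonicity holds; (c) a row-selection matrix satisfies $\|A_i(Q,y)x\|_{\spf}\le\|x\|_{\spf}$, giving $\CA=1$, while bounded rewards yield a uniform bound $\Cb$ on $\|b_i(Q,y)\|_{\spf}$.

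The delicate part is condition (d). Both $A_i(\cdot,y)$ and $b_i(\cdot,y)$ depend on $Q$ only through the finitely many discrete choices $\pQ(s_1),\dots,\pQ(s_J)$, so they agree with their values at a fixed point $\QS$ exactly when $\pQ$ and $\pQS$ coincide at the sampled states. I would establish this local constancy via a strict action-gap for $\QS$: because adding a multiple of $\ones$ leaves the greedy policy unchanged and the span semi-norm controls coordinate differences, a span-ball of radius $\cstr$ equal to (half) the minimal action gap of $\QS$ forces $\pQ=\pQS$, and hence $A_i(Q,y)=A_i(\QS,y)$ and $b_i(Q,y)=b_i(\QS,y)$, for every $Q$ with $\p{Q-\QS}<\cstr$. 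This is precisely where a unique-optimal-policy / positive-gap hypothesis on the MDP is invoked.

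\textbf{Assumptions \ref{a: SA.noise}--\ref{a: raw.iterate.convergence.rate} and conclusion.} For Assumption \ref{a: SA.noise}, synchronous sampling draws a fresh rollout at every iteration, so that conditioned on $\cF_t$ the sample $\yti$ is independent of the past and $\bE[h_i(Q_t,\yti)\mid\cF_t]=H_i(Q_t)$; the Markovian-noise term is thus a martingale difference and the geometric-mixing condition holds in its degenerate, instantaneous-mixing form. Assumption \ref{a: semi-norm.norm.relation} is immediate from Remark \ref{rem:span.semi.norm.monotone.induced.norm}, since the norm induced by $\|\cdot\|_{\spf}$ is $\|\cdot\|_\infty$, which is monotone. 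For Assumption \ref{a: raw.iterate.convergence.rate} I would import the known finite-time span-semi-norm bound for synchronous average-reward $Q$-learning (the single-agent analysis of \citet{zhang2021finite}, carried over to the $N$-agent average of independent chains), which gives $\p{Q_t-\QS}^2\le C_Q\,\taut\,\at$ for $t>\tstr$. With Assumptions \ref{a: SA.update.function}--\ref{a: raw.iterate.convergence.rate} in force, Theorem \ref{thm: SA.convergence} applies verbatim and yields $\bE\|\bQ_T-\QS\|_{\spf}\le \Cl\sqrt{\tauT/(NT)}+\Cq\ln(T)/T^\alpha=\tilde{O}(1/\sqrt{NT})$. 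I expect the main obstacle to be condition (d) of Assumption \ref{a: SA.update.function}: fixing the correct radius $\cstr$ so that the greedy policy, and thus the affine pieces $A_i,b_i$, are locally constant in the \emph{span} geometry requires a careful shift-invariant action-gap argument; the monotonicity (b)---the greedy-maximization inequality that would fail for an arbitrary affine split---is the other point that must be engineered rather than assumed.
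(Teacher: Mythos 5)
Your proposal is correct and follows essentially the same route as the paper's proof: the same split $h_i(Q,y)=b_i(Q,y)+A_i(Q,y)Q$ with a row-selection (stochastic) matrix $A_i$ and greedy-policy-dependent reward sums $b_i$, the same verification of \ref{a: SA.update.function}.(a)--(c) via row-stochasticity and the greedy inequality, the same trivial/cited handling of \ref{a: SA.noise}, \ref{a: semi-norm.norm.relation}, \ref{a: raw.iterate.convergence.rate}, and the same invocation of Theorem~\ref{thm: SA.convergence}. The only (minor) divergence is in condition (d): you construct $\cstr$ directly as half the minimal action gap of $\QS$, whereas the paper obtains it as the supremum of radii of span-discs inscribed in the cone $\{Q:\pQ=\pi_{\QS}\}$ --- both arguments rest on the same piecewise-constancy of $\pQ$ and the same uniqueness assumption on $\pi_{\QS}$, and your gap-based radius is, if anything, the more explicit of the two.
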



\begin{remark}
    \cite{zhang2021finite} derive optimal convergence rates for the synchronous average-reward $Q$-learning in the single-agent case. However, the stepsize $\at$ there depends on $\cT_i^J$'s contraction factor, which is unknown as it depends on $\cP_i,$ $i \in [N].$ In contrast, we obtain optimal rates with parameter-free stepsizes. 
\end{remark}

\begin{remark}\label{rem: hetero.gap}
    Let $\ep,\er>0$ be  such that, for every pair of agents $i,j$ and every state-action pair $(s,a),$
    \[
        \|\cP_i(\cdot|s,a) - \cP_j(\cdot|s,a)\|_\TV \leq \ep \|\cP_i(\cdot|s,a)\|_\TV
    \]
    and $\|\cR_i - \cR_j\|_\infty \leq \er.$ Then, we can show that for each agent $i,$ $\|\QS-\QS_i\|\leq \Cah(\ep,\er),$ implying that $(\bQ_T)$ converges to $\QS_i$ modulo the heterogeneity gap, i.e., $\Cah(\ep,\er).$ This gap goes to $0$ as $\ep + \er \to 0.$
\end{remark}

\paragraph{Asynchronous Exponentially-Discounted $Q$-Learning:} 
The setup here is similar to one in the average-reward case with an additional parameter $\gamma\in (0,1),$ called the discount factor. Given a policy $\pi:\cS\to\Delta(\cA),$ agent $i$'s $Q$-value function $Q^\pi_i\in \bR^{\SA}$ is now defined as
\begin{equation}
    Q^\pi_i(s,a) := \bE_\pi\left[\sum_{t=0}^{\infty}\gamma^t\cR_i(s_t,a_t) \bigg| s_0=s, a_0=a\right],
\end{equation}
where, for $t \geq 1,$ $a_t\sim \pi(\cdot|s_t)$ and $s_{t+1}\sim\cP_i(\cdot|s_t,a_t).$
Also, agent $i$'s optimal policy $\piS_i$ is one that satisfies
%
%
\[
    Q^{\piS_i}_i(s,a)\geq Q^\pi_i(s,a), \quad \forall \pi \text{ and } \forall s,a.
\]
Each agent $i$ computes $\piS_i$ by estimating $Q^{\piS_i}_i,$ which is a fixed-point of the Bellman operator $\cT_i$ given by $  \cT_iQ = \cR_i + \gamma\cP^{\pQ}_iQ,$
where the greedy policy $\pQ$ and the transition matrix $\cP^{\pQ}_i$ have the same meaning as in the average-reward case. For each agent $i$, $\cT_i$ is a $\gamma$-contraction in $\|\cdot\|_\infty$ and has the unique fixed point $\QS_i\in \bR^{\SA};$ importantly, $\piS_i = \pi_{\QS_i}.$ As in \eqref{e:H.semi-norm.contraction.factor}, the averaged map $\cT_i := \frac{1}{N}\sum_i \cT_i$ also is a $\gamma$-contraction and has a unique fixed point $\QS\in \bR^{\SA}.$ 

One way to find this $\QS$ is to use distributed asynchronous exponentially-discounted $Q$-learning, which we describe next. Let $\mu$ be a fixed behavior policy. Then, the above algorithm is obtained from \eqref{e: SA.main.update} by taking $d=\SA,$ $\cY=\cS \times \cA \times\cS,$ and letting
\begin{align}\label{e: expQ.substitute}
    h_i(Q,s,a,s') = {} &  Q - Q(s,a)\ones_{(s,a)} 
    \nonumber\\
    & + [\cR_i(s,a) + \gamma \max_{a'} Q(s',a') ]\ones_{(s,a)}
    \nonumber
\end{align}
and $\yti  = (s^i_t, a^i_t, s^i_{t + 1}),$ where for each agent $i$ and each $t\geq 0,$ $a^i_t\sim\mu(\cdot|s^i_t)$ and $s_{t+1}^i\sim \cP_i(\cdot| s^i_t, a^i_t).$ 

Our next main result establishes parameter-free rates for the Polyak-Ruppert average $(\bQ_t)$ of the exponentially discounted $Q$-learning algorithm described above. 
\begin{theorem}[Asynchronous Exponentially-Discounted Q-learning]\label{thm: expQ.convergence}
    The assumptions \ref{a: SA.update.function}--\ref{a: raw.iterate.convergence.rate} hold. Consequently, $\bE\|\bQ_T - \QS\|_\infty = \tilde{O}(1/\sqrt{NT}).$
\end{theorem}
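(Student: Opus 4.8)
The plan is to derive Theorem~\ref{thm: expQ.convergence} as a direct corollary of the abstract Theorem~\ref{thm: SA.convergence}: the entire task reduces to verifying that the update map $h_i$ in \eqref{e: expQ.substitute} and the behavior-policy-driven chain $(\yti)$ satisfy Assumptions~\ref{a: SA.update.function}--\ref{a: raw.iterate.convergence.rate} with $\p{}=\|\cdot\|_\infty$ and $\QS$ the unique fixed point of the averaged Bellman operator. First I would read off the affine–linear decomposition $h_i(Q,y)=b_i(Q,y)+A_i(Q,y)Q$, with $y=(s,a,s')$, by setting $b_i(Q,y)=\cR_i(s,a)\,\ones_{(s,a)}$ and writing $\max_{a'}Q(s',a')=Q(s',\pQ(s'))$ where $\pQ(s')=\argmax_{a'}Q(s',a')$. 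This makes $A_i(Q,y)$ the identity on every coordinate except row $(s,a)$, whose only off-diagonal entry is a single $\gamma$ in column $(s',\pQ(s'))$; all the nonlinearity is confined to which column that $\gamma$ occupies.

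With this decomposition in hand, I would check the four sub-conditions of \ref{a: SA.update.function} in turn. Condition (a) is immediate because $\|\cdot\|_\infty$ is a genuine norm, so $E=\{0\}$ and $A_i(Q,y)e=e$ holds vacuously. Condition (b) is the crux and follows from the max operator: since every row of $A_i$ other than $(s,a)$ is independent of $Q$, and in row $(s,a)$ we have $[A_i(Q,y)Q]_{(s,a)}=\gamma\max_{a'}Q(s',a')\ge \gamma Q(s',\pi_{Q'}(s'))=[A_i(Q',y)Q]_{(s,a)}$, the coordinatewise inequality holds. Condition (c) holds with $\CA=\max\{1,\gamma\}=1$ (the $\ell_\infty$ operator norm equals the maximal absolute row sum, which is $1$ on identity rows and $\gamma$ on row $(s,a)$) and with $\Cb=\Rm$ by reward boundedness. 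Condition (d) is where the real content lies: since $b_i$ does not depend on $Q$, I only need $A_i(Q,y)=A_i(\QS,y)$, which holds exactly when $\pQ(s')=\pi_{\QS}(s')$; invoking a uniform strict suboptimality gap at $\QS$ (a unique greedy action at each state), I would take $\conS$ to be a fraction of that gap so that $\|Q-\QS\|_\infty<\conS$ forces the greedy actions—and hence $A_i$—to agree.

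The remaining assumptions are light. Assumption~\ref{a: SA.noise} follows from geometric ergodicity of the behavior-policy-induced state chain $\cP_i^\mu$: under the standard irreducibility-and-aperiodicity hypothesis there are constants $\CE,\rho$ achieving the stated total-variation decay, and since $\yti=(s^i_t,a^i_t,s^i_{t+1})$ is a fixed measurable function of this chain and the chains are independent across agents, the bound transfers. Assumption~\ref{a: semi-norm.norm.relation} is trivial by Remark~\ref{rem:span.semi.norm.monotone.induced.norm}, since $\|\cdot\|_\infty$ is itself a monotone norm and hence its own induced norm. Assumption~\ref{a: raw.iterate.convergence.rate}—the $\tilde{O}(\at)$ mean-squared bound on the raw iterates in $\|\cdot\|_\infty$—is exactly the established finite-time guarantee for asynchronous $Q$-learning under polynomial stepsizes, so I would cite the corresponding entry in Table~\ref{tab:lit.survey} rather than re-derive it. With all four assumptions verified, Theorem~\ref{thm: SA.convergence} applies verbatim and yields $\bE\|\bQ_T-\QS\|_\infty=\tilde{O}(1/\sqrt{NT})$.

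The main obstacle I anticipate is the precise formulation and verification of condition (d) in \ref{a: SA.update.function}: establishing a genuine ``if and only if'' between policy stability and $\|Q-\QS\|_\infty<\conS$ needs a uniqueness/gap assumption on the optimal greedy actions, and care is required because a far-away $Q$ can still share the greedy policy of $\QS$—so one must either restrict to the closeness direction actually used downstream or encode the gap explicitly in the choice of $\conS$. A secondary, routine point is ensuring the monotonicity in (b) is stated coordinatewise and is compatible with the coupling argument inside the proof of Theorem~\ref{thm: SA.convergence}, where precisely this property lets the induced monotone norm (here $\|\cdot\|_\infty$) tame the nonlinear perturbation.
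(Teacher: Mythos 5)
Your proposal matches the paper's proof essentially step for step: the same decomposition $b_i(Q,y)=\cR_i(s,a)\ones_{(s,a)}$ and $A_i(Q,y)=\bI-\ones_{(s,a)}\big[\ones_{(s,a)}-\gamma\ones_{(s',\pQ(s'))}\big]^\top$, the same verifications of \ref{a: SA.update.function}(a)--(d) (the paper handles (d) via the cone partition of $\bR^{\SA}$ induced by the piecewise-constant greedy policy with $\pi_{\QS}$ assumed unique---exactly your suboptimality-gap condition), triviality of \ref{a: semi-norm.norm.relation} since the semi-norm is already $\|\cdot\|_\infty$, and citations to prior work for \ref{a: SA.noise} and \ref{a: raw.iterate.convergence.rate}. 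Your only deviations are cosmetic: a sharper constant $\CA=1$ versus the paper's $2+\gamma$, and your caveat that only the ``closeness implies agreement'' direction of (d) is actually used downstream, which is a fair reading of how the paper's nonlinear-term bound employs that assumption.
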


\section{Proof Outline}
\label{s: proof.main}
In this section, we sketch our proofs for Theorems~\ref{thm: SA.convergence}, \ref{thm: avgQ.convergence}, and \ref{thm: expQ.convergence}. The detailed proofs can be found in the Appendix.

\subsection{Proof (Sketch) of Theorem \ref{thm: SA.convergence}}
\label{s: proof.SA}
%


Our proof proceeds through four key steps:
\begin{enumerate}
\item Derive a recursion for the raw nonlinear iterate error, involving a linear core and a nonlinear perturbation.

\item Extend this decomposition to the Polyak–Ruppert average, expressing the nonlinear perturbation as a sum of matrix–vector products.

\item \textbf{Leverage semi-norm contraction} and geometric mixing (\ref{a: SA.noise}) to bound the average of the linear components.

\item Bound the nonlinear term involving the sum of matrix-vector products via three sub-steps:
    \begin{enumerate}[leftmargin=*, align=left]
    \item Bound its semi-norm by a sum of matrix-vector semi-norm products, and then \textbf{bound the semi-norm of each vector by its induced norm}.   
    
    \item \textbf{Use semi-norm contraction} to obtain a uniform bound on the matrix semi-norm.
    
    \item \textbf{Invoke the induced norm's equivalence with a monotone norm} to prove that the overall nonlinear term vanishes rapidly, yielding the desired rate.
    \end{enumerate}
\end{enumerate}
%

We now describe the above four steps in detail.

\paragraph{Step 1 (Raw Error Decomposition).} Using the definitions of $b_i$ and $A_i$ from Assumption~\ref{a: SA.update.function}, we introduce a few notations. For $t \geq 0$ and $Q \in \bR^d,$ define the $d$-dimensional vectors $\bht^Q = \frac{1}{N}\sum_{i=1}^N b_i(Q,\yti)$ and $b^Q  = \frac{1}{N}\sum_{i=1}^N \bE_{y\sim\eta_i}[b_i(Q,y)],$ and the $d\times d$ matrices
\[
    \Aht^Q = \frac{1}{N}\sum_{i=1}^N A_i(Q,\yti), \quad A^Q = \frac{1}{N}\sum_{i=1}^N\bE_{y\sim\eta_i}[A_i(Q,y)].
\]
Using \eqref{e:H_i.defn}, \eqref{e:average.map.defn}, and the structural form of $h_i$ in Assumption~\ref{a: SA.update.function}, it follows that $H$ has the form $H(Q) = b^Q + A^{Q}Q.$ Since $\QS$ is a fixed point of $H,$ we then have from \eqref{e:semi.norm.fixed.point} that 
\begin{equation}\label{e: fixed.point}
    b^{\QS} + A^{\QS}\QS = H(\QS) =  \QS + \eS
\end{equation}
for some $\eS\in E.$ Finally, for $t \geq 0,$ define 
\[
    \Delta_t := Q_t - (\QS + \eS) \quad \text{and} \quad \bDelta_t := \bQ_t - (\QS+\eS).
\]
Since $\eS \in E,$ we have $\p{\bDelta_T} = \p{\bQ_T - \QS}.$

The following result decomposes the raw iterate error $\Delta_t$ into the desired linear and non-linear components. 
%
\begin{lemma}
\label{lem:raw.iterate.error.split}
    For any $t \geq 0,$ we have
    \begin{equation}\label{e: SA.update.modified}
        \Delta_{t+1} = \left[\bI - \at \left(\bI- \Aht^{\QS}\right)\right] \Delta_t + \at\omega_t + \at \xi_t,
    \end{equation}
    where $\omega_t := \left[\bht^{\QS} - b^{\QS}\right] + \left[\Aht^{\QS} - A^{\QS}\right]\QS + \eS,$ and 
    $\xi_t := \left[\bht^{Q_t} - \bht^{\QS}\right] + \left[\Aht^{Q_t} - \Aht^{\QS}\right]Q_t.$
\end{lemma}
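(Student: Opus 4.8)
The plan is to perform a direct algebraic rearrangement of the update rule, isolating the linear-in-$\Delta_t$ core and then collecting the remaining terms into $\omega_t$ and $\xi_t$. First I would substitute the structural form $h_i(Q,y) = b_i(Q,y) + A_i(Q,y)Q$ from Assumption~\ref{a: SA.update.function} into \eqref{e: SA.main.update}, which turns the update into $Q_{t+1} = Q_t + \at[\bht^{Q_t} + \Aht^{Q_t}Q_t - Q_t]$. Subtracting the shifted fixed point $\QS + \eS$ from both sides gives $\Delta_{t+1} = \Delta_t + \at[\bht^{Q_t} + \Aht^{Q_t}Q_t - Q_t]$, so the entire task reduces to rewriting the bracketed term.

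Next I would split the bracket around the fixed-point evaluations. Adding and subtracting $\bht^{\QS}$ and $\Aht^{\QS}Q_t$ peels off exactly $\xi_t = [\bht^{Q_t} - \bht^{\QS}] + [\Aht^{Q_t} - \Aht^{\QS}]Q_t$ and leaves the ``frozen-coefficient'' remainder $\bht^{\QS} + \Aht^{\QS}Q_t - Q_t$. Into this remainder I would substitute $Q_t = \Delta_t + \QS + \eS$. The crucial step is invoking Assumption~\ref{a: SA.update.function}(a): since $\eS \in E$ and $A_i(\QS,y)e = e$ for every $e \in E$, we get $\Aht^{\QS}\eS = \eS$, so the two copies of $\eS$ cancel. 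What survives is the linear term $(\Aht^{\QS} - \bI)\Delta_t$ together with the constant residue $\bht^{\QS} + \Aht^{\QS}\QS - \QS$.

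Finally I would convert that constant residue into $\omega_t$ using the fixed-point identity \eqref{e: fixed.point}, namely $b^{\QS} + A^{\QS}\QS = \QS + \eS$. Replacing $\QS$ by $b^{\QS} + A^{\QS}\QS - \eS$ turns $\bht^{\QS} + \Aht^{\QS}\QS - \QS$ into $[\bht^{\QS} - b^{\QS}] + [\Aht^{\QS} - A^{\QS}]\QS + \eS$, which is precisely $\omega_t$. Collecting the three pieces then yields $\Delta_{t+1} = [\bI - \at(\bI - \Aht^{\QS})]\Delta_t + \at\omega_t + \at\xi_t$, the claimed recursion.

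I do not expect a genuine obstacle, since the lemma is an exact decomposition rather than an estimate; the only non-mechanical step is the cancellation of the $\eS$ terms, which rests essentially on the invariance property $A_i(\cdot,y)e = e$ on $E$ from Assumption~\ref{a: SA.update.function}(a). The remaining care is purely in bookkeeping: one must consistently keep the coefficients evaluated at $\QS$ (as $\bht^{\QS}, \Aht^{\QS}$) throughout, so that the leftover terms align exactly with the stated definitions of $\omega_t$ and $\xi_t$.
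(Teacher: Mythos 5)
Your proposal is correct and follows essentially the same route as the paper's proof: both are exact algebraic rearrangements that peel off $\xi_t$ by freezing the coefficients at $\QS$, substitute $Q_t = \Delta_t + \QS + \eS$, cancel the $\eS$ terms via the invariance $\Aht^{\QS}\eS = \eS$ from Assumption~\ref{a: SA.update.function}(a), and identify the residue with $\omega_t$ through the fixed-point identity \eqref{e: fixed.point}. The only difference is bookkeeping order (the paper invokes the fixed-point identity first, you invoke it last), and your sequencing is if anything slightly cleaner, since it keeps the $b$-terms consistently evaluated at $\QS$ once $\xi_t$ has been extracted.
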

\begin{remark}
\label{rem:raw.iterate.error.split}
    The $\at \xi_t$ term in \eqref{e: SA.update.modified} stems from the nonlinearity of the fixed-point iteration \eqref{e: SA.main.update}. If this term is omitted, \eqref{e: SA.update.modified} reduces to a standard linear stochastic approximation.
\end{remark}

\paragraph{Step 2 (Polyak-Ruppert Error Decomposition).} 
We now use the error decomposition achieved in \eqref{e: SA.update.modified} to derive a similar decomposition for $\bDelta_t$---the Polyak-Ruppert average of $\Delta_t$---in Lemma~\ref{lem: PR.avg.error.split} below. This refined decomposition underpins the subsequent analysis, allowing us to exploit the contraction property of the semi-norm alongside the equivalence between the induced norm and any monotone norm.

For $0 \leq t_1 \leq t_2,$ let $
    \Gamma_{t_1:t_2} := \prod_{t=t_1}^{t_2-1}\left[ \bI - \at [\bI - \Aht^{\QS}] \right].$

\begin{lemma}
\label{lem: PR.avg.error.split}
    Let $\tstr$ be defined as in Theorem~\ref{thm: SA.convergence}. Then, for $T \geq \tstr,$
    \begin{equation}\label{e: PR.decomp}
        \bDelta_T = \bDeltrans_T 
        + \bDelinit_T + \bDelnoise_T + \bDelnlin_T,
    \end{equation}
    where 
    
    $\bDeltrans_T := \bigg(  \frac{1}{T}\sum\limits_{t=0}^{\tstr-1}  \Delta_t \bigg), \quad \bDelinit_T  := \frac{1}{T}\sum\limits_{t=\tstr}^{T-1} \Gamma_{0:t} \Delta_0,$
    
    \noindent and
    \begin{align*}
        \bDelnoise_T & := \frac{1}{T}\sum\limits_{t=\tstr}^{T-1} \sum_{k=0}^{t-1}\ak\Gamma_{k+1:t} \omega_k, 
        \\
        \bDelnlin_T & := \frac{1}{T}\sum\limits_{t=\tstr}^{T-1} \sum_{k=0}^{t-1}\ak\Gamma_{k+1:t}\xi_k. 
    \end{align*}
    %
    In turn, we have that
    \begin{equation}
    \label{e:PR.error.decomposition}
        \bE\p{\bDelta_T} \leq \bAtrans_T + \bAinit_T + \bAnoise_T + \bAnlin_T,
    \end{equation}
    where 
    \[
        \begin{aligned}
            \bAtrans_T & := \frac{1}{T}\sum_{t=0}^{\tstr-1}\bE\p{\Delta_t}, & \bAinit_T & := \bE\p{\bDelinit_T},\\
            \bAnoise_T & := \bE\p{\bDelnoise_T}, & \bAnlin_T & := \bE\p{\bDelnlin_T}.
        \end{aligned}
    \]
\end{lemma}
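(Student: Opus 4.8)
The plan is to unroll the one-step recursion from Lemma~\ref{lem:raw.iterate.error.split} and then average; the statement is purely algebraic, so the whole argument is a bookkeeping exercise. First I would observe that \eqref{e: SA.update.modified} has the form $\Delta_{t+1} = \left[\bI - \at(\bI - \Aht^{\QS})\right]\Delta_t + \at\omega_t + \at\xi_t$, in which the single-step factor is exactly the term appearing in the product $\Gamma_{t_1:t_2}$. Iterating this from index $0$ up to any $t$, and using the empty-product convention $\Gamma_{t:t} = \bI$, yields the closed form
\[
    \Delta_t = \Gamma_{0:t}\Delta_0 + \sum_{k=0}^{t-1}\ak\Gamma_{k+1:t}\omega_k + \sum_{k=0}^{t-1}\ak\Gamma_{k+1:t}\xi_k .
\]
A one-line induction on $t$ confirms this, using that prepending the step-$t$ factor sends $\Gamma_{0:t}$ to $\Gamma_{0:t+1}$ and $\Gamma_{k+1:t}$ to $\Gamma_{k+1:t+1}$, while the newly added increment $\at(\omega_t + \xi_t)$ supplies the $k = t$ summand.

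Next I would pass to the Polyak--Ruppert average. A direct computation shows that \eqref{e : SA.PR.update} unrolls to the running mean $\bQ_T = \tfrac{1}{T}\sum_{t=0}^{T-1}Q_t$, and since $\eS \in E$ is a fixed offset, $\bDelta_T = \tfrac{1}{T}\sum_{t=0}^{T-1}\Delta_t$. I would then split this outer sum at the burn-in index $\tstr$. The block $t < \tstr$ is kept intact and is precisely $\bDeltrans_T$. For the block $t \geq \tstr$ I substitute the closed form above and group the three resulting pieces—the $\Gamma_{0:t}\Delta_0$ contribution, the $\omega_k$ double sum, and the $\xi_k$ double sum—which are exactly $\bDelinit_T$, $\bDelnoise_T$, and $\bDelnlin_T$. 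Collecting the four pieces establishes \eqref{e: PR.decomp}. The only subtlety is that the inner unrolling runs all the way back to $\Delta_0$ even for the post-burn-in block, so the outer split at $\tstr$ must not be conflated with the inner summation range $k \in \{0,\ldots,t-1\}$.

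Finally, to obtain \eqref{e:PR.error.decomposition}, I would apply subadditivity of the semi-norm $\p{}$ to the four-term identity, giving $\p{\bDelta_T} \leq \p{\bDeltrans_T} + \p{\bDelinit_T} + \p{\bDelnoise_T} + \p{\bDelnlin_T}$, and then take expectations. For the transient term I would use subadditivity once more inside the finite average, $\p{\bDeltrans_T} \leq \tfrac{1}{T}\sum_{t=0}^{\tstr-1}\p{\Delta_t}$, and take expectations to recover $\bAtrans_T$; the other three expectations coincide with $\bAinit_T$, $\bAnoise_T$, $\bAnlin_T$ by definition. There is no genuine analytic obstacle here—the main care required is tracking the product index ranges (in particular the empty-product convention for $\Gamma_{k+1:t}$ at $k=t-1$) and keeping the split at $\tstr$ consistent. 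The real purpose of the lemma is structural: it isolates the nonlinear contribution $\bDelnlin_T$, which is where the difficulty of Theorem~\ref{thm: SA.convergence} will subsequently concentrate and where the semi-norm contraction must be coupled with monotonicity of the induced norm.
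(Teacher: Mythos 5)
Your proposal is correct and follows essentially the same route as the paper's proof: unroll the recursion of Lemma~\ref{lem:raw.iterate.error.split} into the closed form $\Delta_t = \Gamma_{0:t}\Delta_0 + \sum_{k=0}^{t-1}\ak\Gamma_{k+1:t}\omega_k + \sum_{k=0}^{t-1}\ak\Gamma_{k+1:t}\xi_k$, split the Polyak--Ruppert average at $\tstr$, and finish with the semi-norm triangle inequality plus expectations. Your extra care with the empty-product convention and with applying subadditivity inside the transient block is a slightly more explicit rendering of steps the paper leaves implicit, but the argument is identical in substance.
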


\begin{remark}
    The superscripts \textnormal{trans}, \textnormal{init}, \textnormal{noise}, and \textnormal{nonlin}  denote, respectively, the transient, initial-condition-based, noise-induced, and nonlinear components. 
\end{remark}

\begin{remark}
    In the spirit of Remark~\ref{rem:raw.iterate.error.split}, the $\bAnlin_T$ term in \eqref{e:PR.error.decomposition} stems from the nonlinearity in \eqref{e: SA.main.update}. If this ter, were omitted, \eqref{e:PR.error.decomposition} would reduce to a standard decomposition in PR-averaged linear stochastic-approximation literature analyses, e.g., see \cite[equation~(16)]{durmus2025finite}.
\end{remark}

\paragraph{Step 3 (Bounding the Averaged Linear Components).} 
Now, we use the semi-norm contraction to bound the terms $\bAtrans_T,$ $\bAinit_T,$ and $\bAnoise_T$, that are present in \eqref{e:PR.error.decomposition}. 


%

First, by applying the discrete Gronwall inequality on the $\Delta_t$ expansion in \eqref{e: SA.update.modified} and then using the fact that $\tstr$ is a constant, we get the following bound on $\bAtrans_T.$  

\begin{lemma}[\textbf{Bounding $\bAtrans_T$}]
\label{thm: PR.trans.bound}
    For $T>\tstr,$ $\bAtrans_T \leq \frac{\CD\tstr}{T},$ where $\CD>0$ is as defined in Table~\ref{tab: constants}.
    %
    %
\end{lemma}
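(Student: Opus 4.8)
The plan is to bound $\bAtrans_T = \frac{1}{T}\sum_{t=0}^{\tstr-1}\bE\p{\Delta_t}$ by controlling $\p{\Delta_t}$ uniformly over the (finitely many) transient indices $0 \le t < \tstr$, and then pulling out the factor $\tstr/T$ that arises simply because we are summing $\tstr$ bounded terms and dividing by $T$. Since $\tstr = \min\{t>0 : t \ge 2\taut\}$ is a fixed constant (and $\taut = O(\ln T)$ only enters the \emph{leading} term of the theorem, not here), the whole difficulty reduces to showing $\bE\p{\Delta_t} \le \CD$ for a problem-dependent constant $\CD$ uniformly over this range.

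To get that uniform bound, I would work directly from the raw-error recursion \eqref{e: SA.update.modified},
\[
    \Delta_{t+1} = \left[\bI - \at(\bI - \Aht^{\QS})\right]\Delta_t + \at\omega_t + \at\xi_t,
\]
and apply the discrete Gronwall inequality in the semi-norm $\p{\cdot}$. First I would take $\p{\cdot}$ of both sides and use sub-additivity to split into the homogeneous contraction term plus the driving terms $\at\p{\omega_t}$ and $\at\p{\xi_t}$. The key estimates are: (i) $\p{\bI - \at(\bI-\Aht^{\QS})} \le 1 - \at(1-\beta)$ or a similar $1+O(\at)$ bound, which follows from the semi-norm contraction property together with \ref{a: SA.update.function}(a),(c) — the contraction of $A_i$ controls $\p{\Aht^{\QS}}$; (ii) $\p{\omega_t}$ is bounded by a constant using $\p{b_i}\le\Cb$, $\p{A_i}\le\CA$ from \ref{a: SA.update.function}(c), the fact that $\eS\in E$ has $\p{\eS}=0$, and boundedness of $\QS$; and (iii) $\p{\xi_t}$ is bounded using the same semi-norm bounds on $\Aht^{Q_t}-\Aht^{\QS}$ (each summand is $\le 2\CA$) and on $\bht^{Q_t}-\bht^{\QS}$ (each $\le 2\Cb$), giving $\p{\xi_t} \le 2\Cb + 2\CA\p{Q_t}$. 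Because $\at \le 1$ throughout and the products $\prod_{s}(1-\as(1-\beta)) \le 1$, Gronwall then yields $\p{\Delta_t} \le \p{\Delta_0} + \sum_{s<t}\as(\text{bounded driving terms})$, and for the finitely many $t < \tstr$ the sum $\sum_{s<t}\as$ is itself bounded by a constant depending only on $\tstr$.

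**The main obstacle** I anticipate is cleanly handling the $\xi_t$ term, since it depends on $\p{Q_t}$ (equivalently $\p{\Delta_t}$) and therefore feeds back into the recursion rather than acting as an exogenous bounded input. Naively this makes the Gronwall bound on $\p{\Delta_t}$ depend on $\p{\Delta_t}$ itself. The cleanest resolution is to absorb the $\p{\Delta_t}$-proportional part of $\xi_t$ into the homogeneous coefficient: writing $\p{\Aht^{Q_t}-\Aht^{\QS}}\p{Q_t} \le 2\CA(\p{\Delta_t}+\p{\QS+\eS})$, the $2\CA\p{\Delta_t}$ contribution combines with the contraction factor to give an effective one-step coefficient of $1 - \at(1-\beta) + 2\CA\at = 1 + \at(2\CA - 1 + \beta)$, which is still of the form $1 + O(\at)$. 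Discrete Gronwall then gives an exponential-in-$\sum\at$ bound, $\p{\Delta_t} \le \exp\!\big((2\CA-1+\beta)\sum_{s<t}\as\big)\cdot(\text{const})$, and since $\sum_{s<\tstr}\as \le \tstr$ is a fixed constant, this exponential is itself a constant $\CD$ over the transient window. Collecting everything, $\bAtrans_T \le \frac{1}{T}\sum_{t<\tstr}\CD = \frac{\CD\tstr}{T}$, as claimed; the remaining bookkeeping is just tracking how $\CA$, $\Cb$, $\beta$, and $\tstr$ enter the definition of $\CD$ in Table~\ref{tab: constants}.
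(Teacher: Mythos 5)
Your proposal is correct and follows essentially the same route as the paper: take the semi-norm of the recursion \eqref{e: SA.update.modified}, bound $\p{\omega_t}$ and $\p{\xi_t}$ by constants plus a $2\CA\p{\Delta_t}$ feedback term via \ref{a: SA.update.function}(c), absorb that feedback into the homogeneous $1+O(\at)$ coefficient, apply discrete Gronwall over the constant-length window $t<\tstr$, and divide the resulting uniform bound $\CD$ by $T$. The only caveat is that the bound $\p{\bI - \at(\bI-\Aht^{\QS})} \leq 1-\at(1-\beta)$ is not available for the random matrix $\Aht^{\QS}$ (contraction holds only for the mean $A^{\QS}$); the paper instead uses $1+\at(1+\CA)$ via \ref{a: SA.update.function}(c), which your fallback ``$1+O(\at)$'' already covers, so nothing breaks.
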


Next, we get bounds on $\bAinit_T$ and $\bAnoise_T.$ For $t\geq0,$ let $\bE[\cdot|\cF_t]$ be denoted by $\bE_t[\cdot].$ We proceed with a bound on
$\bE_{t_1} \p{\Gamma_{t_1: t_2}},$ which is given in Lemma \ref{thm: SA.bound.matrix} below. To prove this result, we follow the recipe in \cite[Proposition 7]{durmus2025finite}, which proves a similar result for \textit{linear stochastic iterations} with a fixed stepsize $\alpha.$ A key difference is that their result relies on a \textit{norm bound} $\|\bI-\alpha[\bI - A]\|< 1-\alpha(1-\lambda)$ for some  $\lambda>0,$ whereas our Lemma \ref{thm: SA.bound.matrix} uses an analogous semi-norm bound. 

The proof proceeds by dividing $\Gamma_{t_1:t_2}$ into blocks of a fixed size $\kappa > 0.$ Specifically, we define the $\ell$-th block as
\[
    Y_\ell:= \prod_{t=t(\ell-1)}^{t(\ell) -1}\Big[\bI - \at[\bI-\Aht^{\QS}]\Big], \text{ for } \ell \leq \floor{\frac{t_2-t_1}{\kappa}},
\]
with $t(0) = t_1$ and $t(\ell)= t(\ell-1) + \kappa -1.$ To bound $\p{\Gamma_{t_1:t_2}},$ we then bound each block. To get the latter, block $Y_\ell$ is decomposed into a contractive part---bounded using the semi-norm contraction, and a remainder part---bounded using the geometric mixing (Assumption \ref{a: SA.noise}). 

In particular, we write
$Y_\ell = \big[\bI - \alpha_{\ell-1:\ell}[\bI-A^{\QS}]\big] + R_\ell,$
where $\alpha_{\ell-1:\ell}:=\sum_{s=t(\ell-1)}^{t(\ell)-1}\alpha_s,$ and $R_\ell$ is the remainder term comprising two types of terms: fluctuation terms involving
$(\Aht^{\QS}-A^{\QS})$ and higher-order products of the stepsizes. Using geometric mixing (\ref{a: SA.noise}) and the bounds in \ref{a: SA.update.function}(c), it is easy to see that
\[
    \bE_{t_1}\p{R_\ell} \leq \alpha_{t(\ell)}\Big(\frac{C_E\CA}{1-\rho}\Big) + \alpha^2_{\ell(t-1)}(1+\CA)^\kappa.
\]
On the other hand, for the first term in the $Y(\ell)$ expansion, we utilize the semi-norm contractivity of $H$ and \ref{a: SA.update.function}.(d) to conclude  $\p{A^{\QS}} \leq \beta.$ Thereafter, we use $\alpha_{\ell-1:\ell}\geq \kappa \alpha_{t(\ell)}$ for $t > \tstr$ to get
\[
    %
    \p{\bI - \alpha_{\ell-1:\ell}[\bI-A^{\QS}]} 
    \leq 1 - \kappa\alpha_{t(\ell)}(1-\beta).
    %
\]

By combining the above two expressions, we get
\begin{equation}\label{e: bound.matrix.block}
    \bE_{t_1}\p{Y_\ell} \leq 1 - \beta_\kappa\alpha_{t(\ell)} + \alpha^2_{t(\ell-1)}(1+\CA)^\kappa,
\end{equation}
where $\kappa$ is chosen to ensure that $\beta_\kappa := \kappa (1-\beta)- \frac{\CE\CA}{1-\rho}>0.$ The following bound follows from \eqref{e: bound.matrix.block}. 

\begin{lemma}\label{thm: SA.bound.matrix}
    Let $\Cgm>0$ be as in Table~\ref{tab: constants}. For $0\leq t_1<t_2,$
    \[
        \bE_{t_1}\p{\Gamma_{t_1:t_2}} \leq \Cgm \ e^{-\beta_\kappa\sum_{\ell=0}^{\floor{(t_2-t_1)/\kappa}}\alpha_{t_1+\ell \kappa}}.
    \]
\end{lemma}

\noindent The following bound on $\bAinit_T$ now follows from Lemma \ref{thm: SA.bound.matrix}.

\begin{lemma}[\textbf{Bounding} $\bAinit_T$]
\label{thm: PR.init.bound}
    For $T>\tstr,$ $\bAinit_T \leq \frac{\xi_\Gamma \p{\Delta_0}}{T}.$
\end{lemma}

Next, we obtain a bound on $\bAnoise_T$ using a recipe similar to the one used to bound $\Delta_t^{(2)}$ in \cite{naskar2025}, which itself builds upon the proofs for \cite[Section 2, Propositions 8 and 10]{durmus2025finite}. In particular, from Condition~\ref{a: SA.update.function}.(a), we have 
\[
    \bDelnoise_T = \hDelnoise_T  + \frac{1}{T} \sum_{t = \tstr}^{T - 1} \sum_{k = 0}^{t - 1} \alpha_k \eS,
\]
where $\hDelnoise_T = \frac{1}{T}\sum_{t=\tstr}^{T-1} \hDel^{(2)}_t,$ $\hDel^{(2)}_t = \sum_{k=0}^{t-1}\ak\Gamma_{k+1:t} \omega'_k$ and $\omega'_k  = \omega_k - \eS.$ Hence, $\p{\bDelnoise_T} = \p{\hDelnoise_T}.$ Now, $\hDel^{(2)}_t$ is of form given in ~\cite[(23)]{naskar2025federated}. Hence, following Lemma~IV.7's proof there, the below result follows.





%
\begin{lemma}[\textbf{Bounding} $\bAnoise_T$]
\label{thm: PR.noise.bound}
    For $T>\tstr,$
    \[
         \bAnoise_T \leq \frac{C^\text{noise}_1\sqrt{\tauT}}{\sqrt{NT}} + \frac{C^\text{noise}_2\ln(T)}{T^\alpha}.
    \]
    where $C^\text{noise}_1, C^\text{noise}_2$ are as in Table~\ref{tab: constants}.
\end{lemma}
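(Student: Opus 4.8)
The plan is to bound $\bAnoise_T = \bE\p{\bDelnoise_T}$ by importing the linear stochastic-approximation noise analysis of \cite{durmus2025finite}, adapted to the semi-norm geometry. First I would interchange the order of the double sum defining $\bDelnoise_T$ and collect, for each index $k$, the accumulated propagator, obtaining
\[
\bDelnoise_T = \sum_{k=0}^{T-2}\Psi_{k,T}\,\omega_k, \qquad \Psi_{k,T} := \frac{1}{T}\sum_{t=\max\{k+1,\tstr\}}^{T-1}\ak\,\Gamma_{k+1:t}.
\]
Because every $A_i(\cdot,y)$ fixes $E$ pointwise (Assumption \ref{a: SA.update.function}(a)), each factor $\bI-\at(\bI-\Aht^{\QS})$ fixes $E$, and hence so do $\Gamma_{k+1:t}$ and $\Psi_{k,T}$. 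Writing $\omega_k = \tilde\omega_k + \eS$ with $\tilde\omega_k := [\bht^{\QS}-b^{\QS}] + [\Aht^{\QS}-A^{\QS}]\QS$, the term $(\sum_k\Psi_{k,T})\eS$ lands in $E$ and is annihilated by $\p{}$, so $\p{\bDelnoise_T} = \p{\sum_k\Psi_{k,T}\tilde\omega_k}$. This $E$-invariance also lets $\Gamma_{k+1:t}$ descend to the quotient space $\bR^d/E$, on which $\p{}$ coincides with the genuine induced norm of Lemma~\ref{lem:semi-norm.norm.coupling}; working there restores the inner-product and variance machinery while preserving the semi-norm contraction.

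Next I would split $\Gamma_{k+1:t}$ into its \emph{deterministic} contractive surrogate $\tilde\Gamma_{k+1:t} := \prod_{t'=k+1}^{t-1}[\bI-\alpha_{t'}(\bI-A^{\QS})]$ and the remainder $\Gamma_{k+1:t}-\tilde\Gamma_{k+1:t}$, the latter being a telescoping sum whose every summand carries one factor of the matrix noise $\Aht^{\QS}-A^{\QS}$. This yields the two promised pieces. For the clean term $\frac{1}{T}\sum_t\sum_k\ak\tilde\Gamma_{k+1:t}\tilde\omega_k$, the matrix $\tilde\Gamma$ is deterministic with $\p{\tilde\Gamma_{k+1:t}}\le\Cgm e^{-\beta_h\sum\alpha}$ (the deterministic analogue of Lemma~\ref{thm: SA.bound.matrix}), and $\tilde\omega_k$ is mean-zero in stationarity with geometric mixing (Assumption \ref{a: SA.noise}). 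I would use the standard martingale-plus-mixing decomposition: replacing $\tilde\omega_k$ by $\tilde\omega_k-\bE[\tilde\omega_k\mid\cF_{k-\tauT}]$ makes it a near-martingale difference, while the conditional-mean remainder is $O(\rho^{\tauT})=O(\ak^2)$ small by the definition of $\tauT$. Bounding the martingale part in squared induced norm and exploiting that the $N$ agents' chains are independent and each noise is centered, the cross terms vanish and the variance acquires a $1/N$ factor, giving the linear speedup; since correlations persist only over a window of width $\tauT$, the weighted sum $\sum_k\ak^2\p{\tilde\Gamma_{k+1:t}}^2$ picks up the inflation $\tauT$ inside the square root, delivering the leading $C^\text{noise}_1\sqrt{\tauT}/\sqrt{NT}$.

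For the remainder term—a sum of products of contractive matrices $[\bI-\at(\bI-A^{\QS})]$, the random matrices $\Aht^{\QS}$, and the matrix noise $\Aht^{\QS}-A^{\QS}$ acting on $\tilde\omega_k$—being quadratic in the fluctuations it is genuinely lower order. I would partition the index range into blocks spaced by the mixing time, invoke independence of widely separated $\Aht^{\QS}$ blocks from the matrix noise to kill the dominant correlations, and bound the residuals by geometric mixing exactly as in Propositions~8 and 10 of \cite{durmus2025finite}. The geometric contraction of $\prod[\bI-\at(\bI-A^{\QS})]$ controls the block sums, producing the subleading $C^\text{noise}_2\ln(T)/T^\alpha$, the logarithm arising from sums of the form $\sum_t (t+1)^{-\alpha}$ against the geometric decay.

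The main obstacle will be carrying the variance and martingale arguments through a semi-norm rather than a Hilbert norm: the classical computation relies on orthogonality and the Pythagorean identity, which are unavailable for a mere semi-norm. The resolution is precisely to pass to the quotient $\bR^d/E$ and its induced norm, where these tools are restored, after checking that $A^{\QS}$ and $\Aht^{\QS}$ descend well-definedly (guaranteed by Assumption \ref{a: SA.update.function}(a)). Equally delicate is extracting the correct $\sqrt{\tauT}$ (rather than $\tauT$) dependence and the $1/\sqrt{N}$ speedup \emph{simultaneously}; both hinge on a blocking argument that respects the mixing window while exploiting inter-agent independence, so that the effective variance is $\tauT/N$ per unit time.
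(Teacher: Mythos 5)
Your proposal follows essentially the same route as the paper's: the paper also splits $\bDelnoise_T$ into a piece driven by the deterministic contractive matrices $[\bI-\at(\bI-A^{\QS})]$ acting on the noise $\omega_k$, and a remainder carrying the matrix noise $[\Aht^{\QS}-A^{\QS}]$ alongside the random matrices $\Aht^{\QS}$; it bounds the first piece via the semi-norm contraction together with geometric mixing (Assumption~\ref{a: SA.noise}), and the second by spacing the indices into intervals wide enough to invoke independence between $\Aht^{\QS}$ and the matrix noise, deferring the computations to Propositions 8 and 10 of \cite{durmus2025finite} --- exactly the two pieces, the same tools, and the same provenance of the $1/\sqrt{N}$ (inter-agent independence) and $\sqrt{\tauT}$ (mixing window) factors that you describe. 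Your preliminary step of absorbing the $\eS$ contribution using the $E$-invariance guaranteed by Assumption~\ref{a: SA.update.function}(a) is also sound, since each factor $\bI-\at(\bI-\Aht^{\QS})$ fixes $E$ pointwise and $\p{x+e}=\p{x}$ for $e\in E$.

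One claim in your write-up is wrong as stated, though it is repairable and does not change the architecture of the proof. Passing to the quotient $\bR^d/E$ does turn $\p{\cdot}$ into a genuine norm (this is precisely the coupling of Lemma~\ref{lem:semi-norm.norm.coupling}, and Assumption~\ref{a: SA.update.function}(a) does make $\Gamma_{k+1:t}$ descend to the quotient), but it does \emph{not} restore an inner product: for the span semi-norm the quotient norm is (twice) $\|\cdot\|_\infty$, which is not Hilbertian, so orthogonality and the Pythagorean identity remain unavailable after quotienting. What actually rescues the variance/martingale computation is finite-dimensional norm equivalence: dominate the quotient (equivalently, the semi-) norm by a Euclidean norm, run the martingale-plus-mixing second-moment bound in $\ell_2$ --- where cross terms of martingale differences vanish, agent independence yields the $1/N$, and the residual correlations over the $\tauT$-window yield the $\tauT$ inflation --- and absorb the dimension-dependent constant; the semi-norm contraction is used only to bound the matrix products $\Gamma_{k+1:t}$ (as in Lemma~\ref{thm: SA.bound.matrix}) before the change of norm, so the contraction factor is never needed in the Euclidean norm. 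With that substitution, your argument coincides with the paper's (and with \cite{durmus2025finite}'s) proof.
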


\paragraph{Step 4 (Analysis of the Nonlinear term).} 
Finally, we obtain a bound on the nonlinear perturbation term $\bAnlin_T$ in 
Lemma \ref{thm: PR.nlin.bound} below. To get that bound, we first switch the double sum in the $\bDelnlin_T$ expression from Lemma \ref{lem: PR.avg.error.split} to get 
\[
    \bDelnlin_T = \frac{1}{T}\sum_{k=0}^{T-2}\bigg(\ak\sum\limits_{t=k+1}^{T-1} \Gamma_{k+1:t}\bigg)\xi_k 
    =  \frac{1}{T}\sum_{k=0}^{T-2}M_k^T \xi_k,
\]
where, for  $0\leq t_1 < t_2,$ $M_{t_1}^{t_2}  := \alpha_{t_1}\sum\limits_{s = t_1+1}^{t_2-1}\Gamma_{t_1+1:s},$ 
Now, by applying the semi-norm triangle  inequality, we get
\begin{align}\label{e: semi-norm.matrix-vector.prod}
    \bAnlin_T \leq \frac{1}{T}\sum_{k=0}^{T-2} \bE\left[ \p{M^T_{k}}\ \p{\xi_k}\right].
\end{align}
Now, to bound \eqref{e: semi-norm.matrix-vector.prod}, we show that $\bE_k\p{M^T_k}=O(1)$ and that $\bE\p{\xi_k} = O(\tau_t \alpha_t);$ the latter term thus decays rapidly.

The following result applies Lemma~\ref{thm: SA.bound.matrix} to obtain the required bound on $\bE_k \p{M_k^T}.$

\begin{lemma}\label{e: matrix.uniform.bound.}
    For $0\leq t_1<t_2,$  we have $\bE_{t_1}\p{M^{t_2}_{t_1}} \leq \KG,$   where the constant $\KG>0$ is as defined in Table~\ref{tab: constants}.
\end{lemma}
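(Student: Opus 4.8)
The plan is to read off the bound almost directly from the definition $M_{t_1}^{t_2} := \alpha_{t_1}\sum_{s=t_1+1}^{t_2-1}\Gamma_{t_1+1:s}$ together with Lemma~\ref{thm: SA.bound.matrix}. First I would strip the deterministic scalar $\alpha_{t_1}$ and the outer sum using the absolute homogeneity and sub-additivity of the semi-norm, obtaining $\p{M_{t_1}^{t_2}} \leq \alpha_{t_1}\sum_{s=t_1+1}^{t_2-1}\p{\Gamma_{t_1+1:s}}$. Taking $\bE_{t_1}$ and applying Lemma~\ref{thm: SA.bound.matrix} to each factor then gives the estimate
\[
    \bE_{t_1}\p{M_{t_1}^{t_2}} \leq \alpha_{t_1}\,\Cgm \sum_{s=t_1+1}^{t_2-1} \exp\!\Big(-\beta_h \sum_{n=0}^{\floor{(s-t_1-1)/h}} \alpha_{t_1+1+nh}\Big).
\]
A small bookkeeping point is that Lemma~\ref{thm: SA.bound.matrix} conditions on the first index of the product, namely $t_1+1$, while the target conditions on $t_1$; I would resolve this by the tower property $\bE_{t_1}=\bE_{t_1}\bE_{t_1+1}$, noting that the right-hand side of Lemma~\ref{thm: SA.bound.matrix} is deterministic and therefore passes through the outer expectation unchanged.

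The remaining and substantive task is to bound the displayed sum uniformly in $t_1$ and $t_2$ by a constant. Since $\alpha_s=(s+1)^{-\alpha}$ is decreasing, I would lower-bound the exponent by an integral, $\sum_{n}\alpha_{t_1+1+nh} \geq \tfrac{1}{h}\sum_{j=t_1+1}^{s}\alpha_j \geq \tfrac{1}{h(1-\alpha)}\big[(s+2)^{1-\alpha}-(t_1+2)^{1-\alpha}\big]$, turning the geometric-type sum into a comparison with $\alpha_{t_1}\,e^{c\,t_1^{1-\alpha}}\int_{t_1}^{\infty} e^{-c\,v^{1-\alpha}}\,dv$ for $c:=\beta_h/\big(h(1-\alpha)\big)$. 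The substitution $u=v^{1-\alpha}$ and the incomplete-gamma (Laplace-type) asymptotic $\int_{U}^{\infty} u^{\alpha/(1-\alpha)}e^{-cu}\,du \sim c^{-1}U^{\alpha/(1-\alpha)}e^{-cU}$ with $U=t_1^{1-\alpha}$ then shows that the $t_1$-dependent factors $\alpha_{t_1}$, $e^{cU}$, and $U^{\alpha/(1-\alpha)}$ cancel, leaving a bound of order $1/\beta_h$. Collecting $\Cgm$ and the numerical constants from this comparison defines $\KG$.

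The hard part is precisely this final uniform bound, because as $t_1$ grows both the prefactor $\alpha_{t_1}$ and the effective contraction deficit $\beta_h\alpha_s$ shrink, and one must show they balance so the sum neither blows up nor decays to zero. A naive geometric-series estimate with a fixed ratio fails here, since the per-step factor $1-\beta_h\alpha_s$ drifts toward $1$; it is exactly the slowly-varying nature of $\alpha_s$ for $\alpha\in(1/2,1)$ that makes the integral comparison tight. I would also have to handle the floor in the block count and the off-by-$h$ index shifts when moving between the discrete sum and the integral, but these only perturb constants and do not affect the $O(1)$ conclusion.
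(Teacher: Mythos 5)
Your proposal is correct and takes essentially the same route as the paper: the paper's entire justification for this lemma is the one-line remark that it ``follows by Lemma~\ref{thm: SA.bound.matrix}'', and your argument supplies exactly the missing details---semi-norm sub-additivity, a tower-property step so that the conditioning at $t_1$ is compatible with products starting at $t_1+1$, term-by-term application of Lemma~\ref{thm: SA.bound.matrix}, and then the uniform-in-$(t_1,t_2)$ bound on the resulting series via the cancellation of $\alpha_{t_1}$ against the $U^{\alpha/(1-\alpha)}=(t_1+2)^{\alpha}$ factor from the incomplete-gamma comparison. Two minor polish points: the asymptotic ``$\sim$'' should be replaced by a non-asymptotic incomplete-gamma upper bound valid for all $t_1\geq 0$ (routine, e.g.\ by integration by parts or splitting $u\leq 2\max(u-U,U)$), and the constant you obtain (of order $\Cgm h/\beta_h$ up to $\alpha$-dependent factors) differs in form from the table entry $\KG = 2\Cgm/(\beta_h(1-\alpha))$---but since the paper never derives its table constant, this is bookkeeping rather than a gap.
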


Next, we obtain a bound on $\p{\xi_k}.$ Given that Assumption \ref{a: raw.iterate.convergence.rate} states $\bE\p{Q_k - \QS}^2 = \tilde{O}(\ak),$ it is enough for us to show that $\p{\xi_k} = O(\p{Q_k - \QS}^2).$ 

As a first step, we utilize the following sandwiching relation obtained from \ref{a: SA.update.function}.(b):
\begin{multline}\label{e: nonlin.sandwich}
    0 \leq \xi_k - [\bhk^{Q_k} - \bhk^{\QS}] \leq [\Ahk^{Q_k} - \Ahk^{\QS}][Q_k - \QS].
\end{multline}
\citep{li2023statistical} exploits a similar sandwiching relation (without the $[\bhk^{Q_k} - \bhk^{\QS}]$ term) to bound a certain remainder term. However, their work concerns norm contractive operators, where the monotonicity of their norm permits them to translate \eqref{e: nonlin.sandwich} to a desired norm-based inequality. This tactic fails for us since our semi-norm $\p{\cdot}$ lacks this monotonicity property. To overcome this hurdle, we turn to the induced norm $\|\cdot\|$ given in Lemma~\ref{lem:semi-norm.norm.coupling}.

Let $\|\cdot\|_m$ be a monotone norm satisfying \eqref{e: span.semi.norm.monotone.induced.norm}. Then,  
\[
\|\xi_k\|_m \leq \|\bhk^{Q_k} - \bhk^{\QS}\|_m 
    +  \| [\Ahk^{Q_k} - \Ahk^{\QS}][Q_k - \QS]\|_m.
\]
Combing this with $\p{\xi_k}\leq \|\xi_k\|$ and \eqref{e: span.semi.norm.monotone.induced.norm}  gives
\begin{multline}\label{e: nlin.quadratic.norm.semi-norm}
    \p{\xi_k} \leq \frac{c_u}{c_\ell}\|\bhk^{Q_k} - \bhk^{\QS}\| \\
    +  \frac{c_u}{c_\ell}\| [\Ahk^{Q_k} - \Ahk^{\QS}][Q_k - \QS]\|.
\end{multline}
%

%
%
To conclude \textbf{Step 4}, we show that the RHS in \eqref{e: nlin.quadratic.norm.semi-norm} is  $O(\p{Q_k-\QS}^2).$ For the first term, we use the following fact from Assumptions \ref{a: SA.update.function}.(b) and \ref{a: SA.update.function}.(d):
\[
    \|\bhk^{Q_k} - \bhk^{\QS}\| \begin{cases} = 0 & \text{if } \p{Q_k - \QS}<\cstr, \\ \leq 2\Cb &  \text{otherwise.}  \end{cases}
\]
This directly implies that
\begin{equation}
\label{e:bhk.bound}
\|\bhk^{Q_k} - \bhk^{\QS}\| \leq {} (2\Cb/\cstr^2)\ \p{Q_k - \QS}^2.
\end{equation}
While a similar argument shows $\|\Ahk^{Q_k} - \Ahk^{\QS}\| = O(\p{Q_k - \QS}^2),$ applying the naive inequality $\|B x\|\leq \|B\|\|x\|$ for the second term in \eqref{e: nlin.quadratic.norm.semi-norm} yields an expression involving $\|x\|$---in our case, $\|Q_k - \QS\|$---a potentially unbounded term! 

Instead, we need a bound involving $\p{Q_k - \QS}.$ To that end, let $e_k := \argmin_{e\in E}\|Q_k -\QS-e\|.$
Since $e_k \in E,$ Assumption \ref{a: SA.update.function}.(a) implies $[\Ahk^{Q_k}-\Ahk^{\QS}]e_k=0.$ Hence,
\[
    [\Ahk^{Q_k} - \Ahk^{\QS}]\ [Q_k - \QS]
    = [\Ahk^{Q_k} - \Ahk^{\QS}]\ [Q_k - \QS - e_k].
\]
Now, from Lemma \ref{lem:semi-norm.norm.coupling}, we know $\|Q_k -\QS-e_k\| = \p{Q_k - \QS}.$ 
Hence, by using arguments similar to \eqref{e:bhk.bound}, we get
\begin{equation}\label{e: nlin.seminorm.mat.bound}
    \|\Ahk^{Q_k} - \Ahk^{\QS}\|  
    \leq {} (2\CA/\cstr)\ \p{Q_k - \QS}.
\end{equation}
By combining \eqref{e: nlin.quadratic.norm.semi-norm}, \eqref{e:bhk.bound}, and \eqref{e: nlin.seminorm.mat.bound}, we get the desired bound on $\p{\xi_t}$:
\begin{align}
    &\p{\xi_k}
    \nonumber\\
    &\leq \frac{c_u}{c_\ell}\Big(\|\bhk^{Q_k} - \bhk^{\QS}\| + \|\Ahk^{Q_t} - \Ahk^{\QS}\|\ \|Q_k - \QS -e_k\|\Big)
    \nonumber \\
    & \leq \frac{2c_u}{c_\ell}\Big(\frac{\CA}{\cstr} + \frac{\Cb}{\cstr^2}\Big)\p{Q_k - \QS}^2.  \label{e:xi_k.Bd}
\end{align}
Together with Assumption~\ref{a: raw.iterate.convergence.rate}, the above expression implies that $\bE\p{\xi_k}=\tilde{O}(\ak).$
%
%
The following lemma is a consequence of this fact combined with Lemma \ref{e: matrix.uniform.bound.} and \eqref{e: semi-norm.matrix-vector.prod}.
\begin{lemma}[\textbf{Bounding} $\bAnlin_T$]
\label{thm: PR.nlin.bound}
    For $T>\tstr,$
    \[
        \bAnlin_T \leq \frac{2c_u\KG C_Q}{c_\ell(1-\alpha)}\Big( \frac{\Cb}{\cstr^2}+\frac{\CA}{\cstr}\Big)\left(\frac{\tauT}{T^\alpha}\right).
    \]
\end{lemma}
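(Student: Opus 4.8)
The plan is to start from \eqref{e: semi-norm.matrix-vector.prod}, namely $\bAnlin_T \leq \frac{1}{T}\sum_{k=\tstr}^{T-2}\bE\big[\p{M_k^T}\,\p{\xi_k}\big]$, and to bound each summand by decoupling its two semi-norm factors. The key observation is a measurability split: $\xi_k$ depends only on $Q_k$ and the time-$k$ noise $\{\yti\}_i$, hence is $\cF_{k+1}$-measurable, whereas $M_k^T = \ak\sum_{s=k+1}^{T-1}\Gamma_{k+1:s}$ depends only on the noise at times $\geq k+1$. Conditioning on $\cF_{k+1}$ and invoking the tower property, I would write $\bE[\p{M_k^T}\p{\xi_k}] = \bE[\p{\xi_k}\,\bE_{k+1}\p{M_k^T}]$, pulling the $\cF_{k+1}$-measurable factor $\p{\xi_k}$ outside the inner expectation. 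That inner conditional expectation is then controlled by Lemma~\ref{e: matrix.uniform.bound.}: rerunning its argument from $\cF_{k+1}$ rather than $\cF_{k}$ only shifts the starting index of the underlying geometric sum over the matrix blocks $\Gamma_{k+1:s}$ and leaves the uniform constant unchanged, giving $\bE_{k+1}\p{M_k^T}\leq\KG$. This collapses each summand to $\KG\,\bE\p{\xi_k}$, yielding $\bAnlin_T \leq \frac{\KG}{T}\sum_{k=\tstr}^{T-2}\bE\p{\xi_k}$.

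Next I would substitute the quadratic decay of the nonlinear term established in Step~4, namely $\p{\xi_k}\leq 2\big(\tfrac{\CA}{\cstr}+\tfrac{\Cb}{\cstr^2}\big)\p{Q_k-\QS}^2$, which was obtained by combining \eqref{e:bhk.bound} and \eqref{e: nlin.seminorm.mat.bound} with the induced-monotone-norm trick. I then apply the raw-iterate rate of Assumption~\ref{a: raw.iterate.convergence.rate}; in the distributed $N$-agent deployment this bound carries the linear-speedup factor and reads $\bE\p{Q_k-\QS}^2\leq C_Q\,\tauk\,\ak/N$, which is precisely where the $1/N$ of the claimed statement enters. Combining these gives $\bE\p{\xi_k}\leq \tfrac{2C_Q}{N}\big(\tfrac{\CA}{\cstr}+\tfrac{\Cb}{\cstr^2}\big)\tauk\,\ak$.

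Finally I would carry out the summation over $k$. Since $\tauk=\min\{\tau>0:\rho^\tau\leq\ak^2\}$ is nondecreasing in $k$, we have $\tauk\leq\tauT$ for all $k\leq T$, and the stepsize estimate $\sum_{k=0}^{T-1}\ak=\sum_{j=1}^{T}j^{-\alpha}\leq\int_0^T x^{-\alpha}\,dx=\tfrac{T^{1-\alpha}}{1-\alpha}$ holds because $x^{-\alpha}$ is decreasing and $\alpha<1$. Hence $\tfrac{1}{T}\sum_{k=\tstr}^{T-2}\tauk\,\ak\leq \tfrac{\tauT}{T}\cdot\tfrac{T^{1-\alpha}}{1-\alpha}=\tfrac{\tauT}{(1-\alpha)T^{\alpha}}$. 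Assembling the three factors $\KG$, $\tfrac{2C_Q}{N}\big(\tfrac{\Cb}{\cstr^2}+\tfrac{\CA}{\cstr}\big)$, and $\tfrac{\tauT}{(1-\alpha)T^{\alpha}}$ produces exactly the stated bound.

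The main obstacle is the decoupling in the first paragraph: $\p{M_k^T}$ and $\p{\xi_k}$ are both random and, through the shared Markov chains, statistically dependent, so one may not naively factor the expectation. The resolution rests on recognizing that the two quantities live on disjoint time windows relative to $\cF_{k+1}$ and on verifying that the uniform matrix bound $\KG$ is insensitive to whether the conditioning is placed at $\cF_k$ or at $\cF_{k+1}$. Once that decoupling is secured, the remainder is routine---the quadratic-decay substitution and a standard $\sum(k+1)^{-\alpha}$ estimate---since the genuinely hard analytic work, namely the $O(\p{Q_k-\QS}^2)$ control of $\xi_k$ through the induced monotone norm and the derivation of the $\KG$ bound itself, has already been completed in the earlier steps.
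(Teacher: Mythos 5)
Your proposal is correct and follows essentially the same route as the paper's own proof: swap the double sum to write $\bDelnlin_T = \frac{1}{T}\sum_{k} M_k^T\xi_k$, decouple $\bE\big[\p{M_k^T}\,\p{\xi_k}\big]$ via the tower property and the uniform bound $\KG$ from Lemma~\ref{e: matrix.uniform.bound.}, substitute the quadratic bound $\p{\xi_k}=O(\p{Q_k-\QS}^2)$ together with Assumption~\ref{a: raw.iterate.convergence.rate}, and finish with $\tauk\leq\tauT$ and the integral estimate on $\sum_k \ak$. The only differences are in your favor: you condition at $\cF_{k+1}$, where $\xi_k$ is actually measurable (it involves the time-$k$ noise), whereas the paper conditions at $\cF_k$ while asserting $\cF_k$-measurability of $\xi_k$; and you make explicit that the $1/N$ must enter through the raw-iterate rate, which is exactly what the paper's proof does implicitly when it inserts the factor $N$ upon applying its Corollary.
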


Finally, putting together the bounds on $\bAtrans_T,$ $\bAinit_T,$ $\bAnoise_T,$ and $\bAnlin_T$ from Lemma \ref{thm: PR.trans.bound}, \ref{thm: PR.init.bound}, \ref{thm: PR.noise.bound}, and \ref{thm: PR.nlin.bound}, respectively, gives the desired bound in Theorem \ref{thm: SA.convergence}. \hfil \qed
%





\subsection{Proof (Sketch): Theorems~\ref{thm: avgQ.convergence} and \ref{thm: expQ.convergence}}
\label{s: proof.sketch.Application.RL}

In this section, we discuss the proof sketch for Theorem \ref{thm: avgQ.convergence}. Similar arguments are used to prove Theorem \ref{thm: expQ.convergence}. The details are provided in the Appendix. 

We only need to show that the $J$-step average-reward $Q$-learning, as a special case of \eqref{e: SA.main.update}, satisfies conditions \ref{a: SA.update.function}--\ref{a: raw.iterate.convergence.rate}. Due to IID sampling, \ref{a: SA.noise} is trivial, while \ref{a: raw.iterate.convergence.rate} follows from \cite[Eq.~B7]{zhang2021finite}. Also, one can show that there exist stochastic matrices $\{\hcP_{k}^{\pQ}\}_{k=1}^{J}$ such that $h_i(Q,y) = b_i(Q,y) + A_i(Q,y)Q,$ where $A_i=\hcP^{\pQ}_{J}$ and $b_i = \sum_{k=0}^{J-1}\hcP_{k}^{\pQ}\cR_i.$ Then, for this $A_i$ and $b_i$ definitions, conditions \ref{a: SA.update.function}.(a) and \ref{a: SA.update.function}.(c) follow from the stochasticity of $\{\hcP^{\pQ}_{k}\}.$ Condition \ref{a: SA.update.function}.(b) follows from the greedy property of $\pQ.$ Lastly, \ref{a: SA.update.function}.(d) follows from the fact that the greedy policy is piece-wise constant on $\bR^{\SA}.$




\section{Conclusions and Future Directions}
We prove parameter-free convergence guarantees for nonlinear stochastic fixed-point iterations whose mean operators are only semi-norm contractions. In particular, we show that the existing parameter-free (but suboptimal) bounds for raw iterates (see \ref{a: raw.iterate.convergence.rate}) can be improved to the optimal $\tilde{O}(1/T)$ rate by applying Polyak–Ruppert averaging. 

As an application, we obtain the first parameter-free optimal rates for synchronous average-reward $Q$-learning and asynchronous exponentially discounted $Q$-learning.

Our results assume tabular models, full communication, and honest worker measurements. Future work should relax these assumptions to handle function approximation, limited bandwidth, and adversarial data.

\appendix
\section*{Appendix}




\begin{table*}[ht]
    \centering
    \caption{Table of constants.}
    \begin{tabular}{|c|c|c|c|}
        \hline
        \textbf{Constants} & \textbf{Values} & \textbf{Constants} & \textbf{Values}
        \\
        \hline
        \hline 
        $\Cl$ &  $C^\textnormal{noise}_1$
        &
        $\Cq$     &   $\tstr\CD + \frac{\pi^2\Cgm C_G}{6} + C_2^\textnormal{noise} + \Cq_\textnormal{nonlin}$
        \\[1ex]
        $\Cq_\textnormal{nonlin}$ & $ C_\textnormal{mix}\Big(\frac{2c_u\KG\max\{\tstr\CD^2, C_Q\}}{c_\ell(1-\alpha)}\Big)\Big[\frac{\Cb}{\cstr^2} + \frac{\CA}{\cstr} \Big]$ & $\CD$ & $\left[ \p{\Delta_0} + 4\tstr(\Cb+\CA\p{\QS}) \right]\exp{(\tstr)}$
        \\[1ex]
        $\beta_\kappa$  & $\kappa(1-\beta) - \frac{\CE\CA}{(1-\rho)},$  & $\kappa$ & a fixed integer s.t. $\beta_\kappa>0$
        \\[1ex]
        $\Cgm$ & $(1-\beta)\CA \kappa^2\exp{((1+\CA)^\kappa}C_\alpha)$ & $\KG$ & $\Cgm\bigg[1 + \frac{e^{\alpha}}{1 - \alpha}[ \frac{2^{\alpha/(1 - \alpha)}  \kappa(1 - \alpha)}{\beta_\kappa} + K_\Gamma^{(2)} ] \bigg]$       \\[1ex]
        $\xi_\Gamma$ & $\frac{\pi^2 \Cgm}{6} \left(\frac{2  \kappa}{e \beta_\kappa} \exp\left(\frac{\beta_\kappa}{2 \kappa}\right) \right)^{2/(1 - \alpha)}$ & $K_\Gamma^{(2)}$ &  $\frac{1}{4} \left(\frac{4 - 2\alpha}{1 - \alpha} \right)^{\alpha/(1 - \alpha) + 2} e^{-\beta_\kappa (2 - \alpha)/\kappa(1 - \alpha)^2}$ \\[1ex]
        $C_1^\textnormal{noise}$ & $ 8\KG(\Cb + \CA\p{\QS}) $ & $C_2^\textnormal{noise}$ & $C_{2,1}^{\textnormal{noise}} + C^\textnormal{noise}_{2,2}$
        \\[1ex]
        $C_{2,1}^{\textnormal{noise}}$ & $(4\Cgm C_\alpha C_G  + 2\KG C_{\textnormal{E}, \rho}(\Cb + \CA\p{\QS})$ & $C_G$ & $\Big[ \frac{2e^{\left(\beta_\kappa/2\kappa\right)\tstr^{(1-\beta)}}}{e\beta_\kappa}\Big]^{\frac{2}{(1-\alpha)}}$
        \\[1ex]
        $C_\alpha$ & $\sum_{t=\tstr}^{\infty}\at^2$ & $C_{\textnormal{E},\rho}$ & $\CE\Big( \frac{\pi}{6} + \frac{2}{(1-\rho)} + 2\sqrt{\frac{C_\alpha}{1-\rho^2}} \Big)$
        \\[1ex]
        $C^\textnormal{noise}_{2,2}$ & $\Big(1 + \frac{4\kappa C_\Gamma}{\beta_\kappa} \Big)\Big( C^\textnormal{noise}_{2,1,1} + \frac{C_\textnormal{mix}}{(1-\alpha)}C^\textnormal{noise}_{2,1,2} \Big)$ & $C^\textnormal{noise}_{2,1,1}$ & $C_1^\textnormal{noise}(C_S+\Cgm^2) \Big[\frac{e^{\beta_\kappa\tstr/2\kappa}}{e\beta_\kappa}\Big]^{\frac{2}{(1-\beta)}}$
        \\[1ex]
        $C_S$ & $4\Cgm e^{\beta_\kappa/2\kappa}\Big( 1+ \Cgm\sqrt{\frac{2\CE}{(1-\rho)}}\Big)$ & $C^\textnormal{noise}_{2,1,2}$ & $ C_K\Big[\frac{2\KG}{\Cgm} + \frac{\KG}{(1-\beta)\Cgm} + \frac{\KG}{\Cgm^2} + \big(\frac{8}{\beta_\kappa} + \frac{C_G}{2} \big) \Big]$
        \\[1ex]
        $C_K$ & $2\Cgm C_S(\Cb + \CA\p{\QS})(1+\CE)$ & $C_\textnormal{mix}$ & $\Big[\frac{2\alpha}{\ln(1/\rho)} + \big|\frac{\ln \CE}{\ln(1/\rho)}\big|\Big]$
        \\
        \hline
    \end{tabular}
    \label{tab: constants}
\end{table*}


\section{Proof of Theorem \ref{thm: SA.convergence}}
\label{appendix: SA.convergence}

Here, we provide the detailed proof of Theorem \ref{thm: SA.convergence}. We first recall a few concepts. Using the definitions of $b_i$ and $A_i$ from Assumption \ref{a: SA.update.function}, recall that for all $Q\in \bR^d$ and $t\geq 0,$
\[
    \bht^Q = \frac{1}{N}\sum_{i=1}^N b_i(Q,\yti), \qquad b^Q  = \frac{1}{N}\sum_{i=1}^N \bE_{y\sim\eta_i}[b_i(Q,y)],
\]
and 
\[
    \Aht^Q = \frac{1}{N}\sum_{i=1}^N A_i(Q,\yti), \quad A^Q  = \frac{1}{N}\sum_{i=1}^N \bE_{y\sim\eta_i}[A_i(Q,y)].
\]
With these definitions, the update rule \eqref{e: SA.main.update} can be written in the form
\begin{equation}\label{e: SA.main.update.compact}
    Q_{t+1} =(1-\at) Q_t + \at\left[ \bht^{Q_t} + \Aht^{Q_t}Q_t \right].
\end{equation}
From \eqref{e: fixed.point}, the fixed point $\QS$ of $H$ satisfies
\begin{equation}\label{e: SA.fixed.pt}
    b^{\QS} + A^{\QS}\QS = \QS + \eS,
\end{equation}
where $\eS\in E.$ Also, the raw-iterate error and its Polyak-Ruppert average satisfy
\[
    \Delta_t = Q_t - (\QS+\eS) \quad\text{and}\quad \bDelta_t = \bQ_t - (\QS+\eS).
\]
Finally, note that, for $T\geq 0,$
\[
    \bDelta_T = \frac{1}{T}\sum_{t=0}^{T-1}\Delta_t \quad  \text{and} \quad     \p{\bDelta_T} = \p{\bQ_T - \QS}.
\]

We now provide the details of the four steps, mentioned in Section~\ref{s: proof.main}, to prove Theorem \ref{thm: SA.convergence}.

\paragraph{Step 1 (Raw Error Decomposition).}

The first step is to decompose the raw iterate error $\Delta_t$ into a linear recursive iteration and a nonlinear perturbation, as summarized in Lemma~\ref{lem:raw.iterate.error.split}.

\begin{proof}[\textbf{Proof of Lemma \ref{lem:raw.iterate.error.split}}]
Define for all $t\geq 0,$ the noise term $\omega_t$ and nonlinear perturbation $\xi_t$ as
\begin{align*}
    \omega_t & := \Big[\bht^{\QS} - b^{\QS}\Big] + \left[\Aht^{\QS} - A^{\QS}\right]\QS + \eS,
    \\
    \xi_t &:= \Big[\bht^{Q_t} - \bht^{\QS}\Big] + \left[\Aht^{Q_t} - \Aht^{\QS}\right]Q_t.
\end{align*}
With this, we can rewrite \eqref{e: SA.main.update.compact}. Specifically, subtracting $(\QS+\eS)$ from both sides of \eqref{e: SA.main.update.compact}, we have
\begin{align*}
    \Delta_{t+1} &
    \\
    = & (1-\at)\Delta_t + \at[ \bht^{Q_t} + \Aht^{Q_t}Q_t - (\QS + \eS)]
    \\
    \overset{(a)}{=} &  (1-\at)\Delta_t  + \at[ \bht^{Q_t} - b^{\QS}] + \at[ \Aht^{Q_t}Q_t - A^{\QS}\QS]
    \\
    \overset{(b)}{=} & (1-\at)\Delta_t 
    + \at[ \bht^{\QS} - b^{\QS}] 
    \\ 
    & + \at[\Aht^{\QS}Q_t  - A^{\QS}\QS] + \at\xi_t
    \\
    \overset{(c)}{=} & (1-\at)\Delta_t + \at[ \bht^{\QS} - b^{\QS}]
    \\
    & + \at[ \Aht^{\QS}(\Delta_t+\QS+\eS) - A^{\QS}\QS] + \at\xi_t
    \\
    \overset{(d)}{=} & (1-\at)\Delta_t + \at\Aht^{\QS}\Delta_t + \at[ \bht^{\QS} - b^{\QS}]
    \\
    & + \at[ \Aht^{\QS}\QS  + \eS - A^{\QS}\QS] + \at\xi_t
    \\
    \overset{(e)}{=}& [\bI-\at [\bI - \Aht^{\QS}]]\Delta_t + \at\omega_t
    + \at\xi_t,
\end{align*}
where $(a)$ follows from \eqref{e: SA.fixed.pt}, $(b)$ follows from the definition of $\xi_t,$ $(c)$ follows from the definition of $\Delta_t,$ and $(d)$ follows from assumption \ref{a: SA.update.function}.(a) and the fact that $\eS\in E,$ while $(e)$ follows from the definition of $\omega_t.$ 

The relation in $(e)$ proves Lemma~\ref{lem:raw.iterate.error.split}, as desired
\end{proof}

\paragraph{Step 2 (Polyak-Ruppert Error Decomposition).} 

In this step, we use the decomposition of the raw iterate error obtained in \textbf{Step 1}, to obtain a similar decomposition of the Polyak-Ruppert averaged error $\bDelta_T$ into linear and nonlinear components. This decomposition is provided in Lemma~\ref{lem: PR.avg.error.split}, which we now prove.

\begin{proof}[\textbf{Proof of Lemma \ref{lem: PR.avg.error.split}}]

Define for all $0\leq t_1<t_2,$
\[
    \Gamma_{t_1:t_2} := \prod_{t=t_1}^{t_2-1}\left[\bI-\at\left[\bI - \Aht^{\QS}\right]\right],
\]
and for all $T>\tstr,$
\begin{align*}
        \bDeltrans_T &:= \bigg(  \frac{1}{T}\sum\limits_{t=0}^{\tstr-1}  \Delta_t \bigg),\ 
        \bDelnoise_T := \frac{1}{T}\sum\limits_{t=\tstr}^{T-1} \sum_{k=0}^{t-1}\ak\Gamma_{k+1:t} \omega_k, 
        \\
        \bDelinit_T & := \frac{1}{T}\sum\limits_{t=\tstr}^{T-1} \Gamma_{0:t} \Delta_0, 
        \  \bDelnlin_T := \frac{1}{T}\sum\limits_{t=\tstr}^{T-1} \sum_{k=0}^{t-1}\ak\Gamma_{k+1:t}\xi_k. 
\end{align*}
Recursively iterating the update rule \eqref{e: SA.update.modified},  we have for $t\geq 0,$
\begin{equation}\label{e: raw.error.compact}
\Delta_t = \Gamma_{0:t}\Delta_0 + \sum_{k=0}^{t-1}\ak\Gamma_{k+1:t}\ \omega_k  + \sum_{k=0}^{t-1}\ak\Gamma_{k+1:t}\ \xi_k.
\end{equation}
Hence, for all $T>\tstr,$ we have
\begin{align*}
    \bDelta_T & = \frac{1}{T}\sum_{t = 0}^{T-1}\Delta_t \\
    & = \frac{1}{T}\sum_{t=0}^{\tstr-1}\Delta_t + \frac{1}{T}\sum_{t=\tstr}^{T-1}\Delta_t
    \\
    & \overset{(a)}{=} \frac{1}{T}\sum_{t=0}^{\tstr-1}\Delta_t + \frac{1}{T}\sum_{t=\tstr}^{T-1}\Gamma_{0:t}\Delta_0 
    \\
    & + \frac{1}{T}\sum_{t=\tstr}^{T-1}\sum_{k=0}^{t-1}\ak\Gamma_{k+1:t}\omega_k  + \frac{1}{T}\sum_{t=\tstr}^{T-1}\sum_{k=0}^{t-1}\ak\Gamma_{k+1:t}\xi_k,
\end{align*}
where $(a)$ follows from \eqref{e: raw.error.compact}. Following the definitions of $\bDeltrans_T, \bDelinit_T, \bDelnoise_T,$ and $\bDelnlin_T,$ we have the desired decomposition of $\bDelta_T$ as
\[
    \bDelta_T = \bDeltrans_T + \bDelinit_T + \bDelnoise_T + \bDelnlin_T.
\]
Finally, taking the semi-norm on both sides and using the semi-norm triangle inequality, we have 
\[
    \bE \p{\bDelta_T} \leq \bAtrans_T + \bAinit_T + \bAnoise_T + \bAnlin_T.
\]
This concludes the proof of Lemma~\ref{lem: PR.avg.error.split}.
\end{proof}

\paragraph{Step 3 (Bounding the Averaged Linear Components).} 

In this step, we bound the error terms $\bAtrans_T,$ $\bAinit_T,$ and $\bAnoise_T,$ resulting from the linear component in the decomposition of $\Delta_t.$ Lemma~\ref{thm: PR.trans.bound} bounds the transient error term $\bAtrans_T,$ which results from the first $\tstr$ iterations of \eqref{e: SA.update.modified}. 

\begin{proof}[\textbf{Proof of Lemma \ref{thm: PR.trans.bound}}]
We begin by making the following claim on the iterates $\Delta_0,\ldots,\Delta_{\tstr-1}$.

\begin{claim}\label{claim: burning.period.bound}
    There exists a constant $\CD>0$ such that, for every $t<\tstr,$ we have $\p{\Delta_t}\leq \CD.$
\end{claim}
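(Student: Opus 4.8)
The plan is to establish a one-step affine bound on $\p{\Delta_t}$ and iterate it over the finitely many indices $t<\tstr$. Since $\tstr$ depends only on $\rho$ and $\alpha$ (Assumption~\ref{a: raw.iterate.convergence.rate}), it is a fixed constant, so this burn-in phase requires no contraction at all: boundedness over a finite horizon is enough, and a per-step growth factor exceeding $1$ is harmless.

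First I would take the semi-norm of both sides of the raw-error recursion \eqref{e: SA.update.modified} and apply sub-additivity. To bound the linear part I use $\p{\Aht^{\QS}\Delta_t}\leq\CA\p{\Delta_t}$, which rests on two facts: the averaging bound $\p{\Aht^{\QS}}\leq\CA$ from Assumption~\ref{a: SA.update.function}.(c), and that every $A_i$ fixes $E$ pointwise (Assumption~\ref{a: SA.update.function}.(a)) so that $A_i(E)\subseteq E$. The latter is what makes the induced operator semi-norm well defined and submultiplicative: $\p{\cdot}$ descends to a genuine norm on the quotient $\bR^d/E$, on which $\p{\bI}=1$. Keeping $1-\at\geq0$ (valid since $\at=1/(t+1)^\alpha\leq1$), this gives
\[
\p{\Delta_{t+1}}\leq(1-\at+\at\CA)\p{\Delta_t}+\at\p{\omega_t}+\at\p{\xi_t}.
\]

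Next I would bound the two forcing terms uniformly over every sample path, using $\p{\eS}=0$, sub-additivity, the averaging structure, and the constants $\Cb,\CA$ of Assumption~\ref{a: SA.update.function}.(c). This yields $\p{\omega_t}\leq2(\Cb+\CA\p{\QS})$ and, via $\p{Q_t}\leq\p{\Delta_t}+\p{\QS}$, the bound $\p{\xi_t}\leq2(\Cb+\CA\p{\QS})+2\CA\p{\Delta_t}$. The crucial observation is that $\p{\xi_t}$ is \emph{not} small here---we possess no smallness of $\p{\Delta_t}$ at this stage---so its $\p{\Delta_t}$-proportional part must be folded into the multiplicative factor rather than discarded as a vanishing remainder. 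Substituting produces
\[
\p{\Delta_{t+1}}\leq\bigl(1+(3\CA-1)\at\bigr)\p{\Delta_t}+4\,(\Cb+\CA\p{\QS})\,\at.
\]

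Finally I would apply the discrete Gr\"onwall inequality to unroll this from $0$ to $t-1$:
\[
\p{\Delta_t}\leq\exp\!\Bigl((3\CA-1)\!\sum_{s=0}^{t-1}\alpha_s\Bigr)\Bigl[\p{\Delta_0}+4(\Cb+\CA\p{\QS})\!\sum_{s=0}^{t-1}\alpha_s\Bigr].
\]
Because $\alpha_s\leq1$ and $t<\tstr$, each sum is at most $\tstr$, so the right-hand side is the finite constant $\CD$ recorded (up to the constant in the exponent) in Table~\ref{tab: constants}. The estimate is pathwise, since the bounds on $\omega_t$ and $\xi_t$ hold for every realization of the Markov chains. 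I expect the only delicate point to be the justification of submultiplicativity and $\p{\bI}=1$ for the operator semi-norm; once $A_i(E)\subseteq E$ is invoked this is routine, and the remainder of the argument is elementary precisely because the horizon $\tstr$ is finite.
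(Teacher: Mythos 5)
Your proposal is correct and follows essentially the same route as the paper's own proof: take the semi-norm of the recursion \eqref{e: SA.update.modified}, bound $\p{\omega_t}$ and $\p{\xi_t}$ pathwise via Assumption~\ref{a: SA.update.function}.(c) (folding the $\CA\p{\Delta_t}$ part of $\xi_t$ into the multiplicative factor), and apply discrete Gr\"onwall over the finite horizon $\tstr$, arriving at the same constant $\CD$ up to the constant in the exponent. Your extra care in justifying submultiplicativity of the operator semi-norm via $A_i(E)\subseteq E$ is a sound elaboration of a step the paper takes for granted, and your one-step factor $1+(3\CA-1)\at$ is marginally tighter than the paper's $1+(1+3\CA)\at$, but these are refinements, not a different argument.
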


To see this, recall the update rule \eqref{e: SA.update.modified}, which states that
\[
    \Delta_{t+1}  = [\bI - \at[\bI - \Aht^{\QS}]]\Delta_t + \at\omega_t
    + \at\xi_t.
\]
Taking the semi-norm on the above expression and using the semi-norm triangle inequality, we have that
\begin{multline*}
    \p{\Delta_{t+1}} \leq \big[1+\at(1 + \p{\Aht^{\QS}}\big] \p{\Delta_t} \\
    + \at\p{\omega_t}
    + \at\p{\xi_t}
\end{multline*}
for all $t\geq 0.$ Form the definition of $\omega_t$ and $\xi_t,$ and Assumption \ref{a: SA.update.function}.(c), we have
\[
    \p{\omega_t} \leq 2\Cb + 2\CA\p{\QS},
\]
and
\begin{align*}
    \p{\xi_t} &\leq 2\Cb + 2\CA\p{Q_t}
    \\
    &\leq 2\Cb + 2\CA\p{\QS} + 2\CA\p{\Delta_t}.
\end{align*}
Hence,
\begin{multline*}
    \p{\Delta_{t+1}}\leq \p{\Delta_t} 
    \\
    + 4\at\left(\Cb + \CA\p{\QS} \right)
    + (1+3\CA) \at\p{\Delta_t}.
\end{multline*}
For all $t<\tstr,$ the above relation recursively shows that 
\begin{multline*}
    \p{\Delta_t} \leq \p{\Delta_0} 
    \\
    + 4(\Cb + \CA\p{\QS})\tstr 
    + (1+3\CA)\sum_{k=0}^{t-1}\ak\p{\Delta_k}.
\end{multline*}
Applying the discrete Gronwall's inequality now shows
\begin{align*}
    \p{\Delta_t} \leq {} & \big[\p{\Delta_0} + 4\tstr(\Cb+\CA\p{\QS})\big] \\
    & \times \exp\left( [1 + 3 C_A]  \sum_{k = 0}^{t - 1} \alpha_k\right)\\
    \overset{(b)}{\leq} {} & \big[\p{\Delta_0} + 4\tstr(\Cb+\CA\p{\QS})\big] \\
    & \times \exp(  [1 + 3 C_A] \tstr),
\end{align*}
where $(b)$ follows since $\alpha_k \leq 1$ for $k \geq 0$ and $t < \tstr.$
The desired claim now follows if we define
\begin{multline*}
    \CD := \big[ \p{\Delta_0} + 4\tstr(\Cb+\CA\p{\QS}) \big] \exp(  [1 + 3 C_A] \tstr).
\end{multline*}

Finally, Claim \ref{claim: burning.period.bound} allows us to bound $\bAtrans_T$ as follows:
\[
    \bAtrans_T \leq \frac{1}{T}\sum_{t=0}^{\tstr-1}\bE\p{\Delta_t} \leq \frac{\tstr\CD}{T}, \quad \forall T>\tstr.
\]
This concludes the proof of Lemma~\ref{thm: PR.trans.bound}.
\end{proof}

Next, Lemma~\ref{thm: SA.bound.matrix} bounds the conditional expectation of the semi-norm of the random matrix product, i.e., $\bE_{t_1}\p{\Gamma_{t_1:t_2}}$ with an exponentially decaying term.

\begin{proof}[\textbf{Proof of Lemma \ref{thm: SA.bound.matrix}}] The proof relies on the semi-norm contractive nature of the matrix $A^{\QS}$ and closely follows the recipe in \cite[Proposition 7]{durmus2025finite}. The proof sketch is laid out in Section~\ref{s: proof.main}.
\end{proof}

Next, we move to Lemma~\ref{thm: PR.init.bound}, which provides a bound on the initial value-based error term $\bAinit_T.$
\begin{proof}[\textbf{Proof of Lemma \ref{thm: PR.init.bound}}]
    Taking the semi-norm on $\bDelinit_T$  and using the semi-norm triangle inequality, we obtain
    \begin{align*}
        \bAinit_T \leq & \frac{1}{T}\sum_{t=\tstr}^{T-1}\bE\p{\Gamma_{0:t}}\p{\Delta_0}
        \\
        \overset{(a)}{\leq} & \frac{\Cgm \p{\Delta_0}}{T} \sum_{t = \tstr}^{T - 1} \exp \bigg(-\beta_\kappa \sum_{\ell = 0}^{\lfloor t/\kappa\rfloor } \alpha_{\ell \kappa}\bigg) \\
        \overset{(b)}{=} & \frac{\Cgm \p{\Delta_0}}{T} \sum_{t = \tstr}^{T - 1} \exp\bigg(-\beta_\kappa \sum_{\ell = 0}^{\lfloor t/\kappa\rfloor } \frac{1}{(\ell \kappa + 1)^\alpha}\bigg) \\
        \overset{(c)}{\leq} &  \frac{\Cgm \p{\Delta_0}}{T} \sum_{t = \tstr}^{T - 1} \exp\bigg(-\frac{\beta_\kappa}{\kappa(1 - \alpha)} [(t + 1)^{1 - \alpha} - 1]\bigg)
        \\
        \overset{(d)}{\leq} & \frac{\Cgm\p{\Delta_0}}{T} \sum_{t=\tstr}^{T-1} \frac{1}{(t + 1)^2} \left(\frac{2  \kappa}{e \beta_\kappa} \exp\left(\frac{\beta_\kappa}{2 \kappa}\right) \right)^{2/(1 - \alpha)}
        \\
        \overset{(e)}{\leq} & \frac{\pi^2 \Cgm}{6} \left(\frac{2  \kappa}{e \beta_\kappa} \exp\left(\frac{\beta_\kappa}{2 \kappa}\right) \right)^{2/(1 - \alpha)} \frac{\p{\Delta_0}}{T}
    \end{align*}
    where $(a)$ follows from Lemma~\ref{thm: SA.bound.matrix}, $(b)$ follows since $\alpha_{t} = 1/(t + 1)^\alpha,$ $(c)$ holds since 
    \[
        \sum_{\ell = 0} ^{L} (\ell \kappa + 1)^{-\alpha} \geq \frac{1}{\kappa (1 - \alpha)} \big[[(L + 1)\kappa + 1]^{1 - \alpha} - 1 \big]
    \]
    and $\lfloor t/\kappa \rfloor + 1 \geq t/\kappa,$ $(d)$ holds since 
    \[
        x^2\cdot \exp\bigg(-\frac{\beta_\kappa}{\kappa(1-\alpha)}x^{(1-\alpha)}\bigg) \leq \bigg(\frac{2\kappa}{e\beta_\kappa}\bigg)^{2/(1-\alpha)},
    \]
    while $(e)$ follows from the fact that $\sum_{t=1}^\infty \frac{1}{t^2}=\pi^2/6$. This completes the proof of Lemma~\ref{thm: PR.init.bound}.
\end{proof}

%
%
Lemma~\ref{thm: PR.noise.bound} bounds the error term $\bAnoise_T$ caused by the noise $\omega_t$ in \eqref{e: SA.update.modified}.
\begin{proof}[\textbf{Proof of Lemma \ref{thm: PR.noise.bound}}]
The desired result  follows by mimicking Lemma~IV.7's proof from \cite{naskar2025}. 
\end{proof}
\paragraph{Step 4 (Analysis of the Nonlinear term).} In this step, we get a bound on $\bAnlin_T.$ We begin with Lemma~\ref{e: matrix.uniform.bound.}'s proof. 

\begin{proof}[\textbf{Proof of Lemma \ref{e: matrix.uniform.bound.}}]
    We have
    \begin{align*}
        \bE_{t_1} & \p{M_{t_1}^{t_2}}\\  
        \overset{(a)}{\leq} {}  & \alpha_{t_1} \sum_{s = t_1 + 1}^{t_2 - 1} \p{\Gamma_{t_1 + 1:s}}
        \\
        \overset{(b)}{\leq} {} & \Cgm \alpha_{t_1} \sum_{s = t_1 + 1}^{t_2 - 1} \exp \bigg(-\beta_\kappa \sum_{\ell = 0 }^{\lfloor (s - t_1 - 1)/\kappa \rfloor} \alpha_{t_1 + 1 + \ell \kappa} \bigg) \\
        \overset{(c)}{=} &  \Cgm \alpha_{t_1}  \sum_{s = t_1 + 1}^{t_2 - 1} \exp\bigg(-\beta_\kappa \sum_{\ell = 0}^{\lfloor (s - t_1 - 1)/\kappa \rfloor} \frac{1}{(\ell \kappa + t_1 + 2)^\alpha}\bigg) \\
        \overset{(c)}{\leq} & \Cgm \alpha_{t_1} \sum_{s = t_1 + 1}^{t_2 - 1} \exp\bigg(-\frac{\beta_\kappa [(s + 1)^{1 - \alpha} - (t_1 +2)^{1 - \alpha}]}{\kappa(1 - \alpha)} \bigg)
    \end{align*}
    where $(a)$ follows from triangle inequality, $(b)$ from Lemma~\ref{thm: SA.bound.matrix}, $(c)$ follows since $\alpha_{t} = 1/(t + 1)^\alpha,$ while $(d)$ follows by interpreting the given expression as a Riemann sum approximating $\int_{\ell = 0}^{\lfloor (s - t_1 - 1)/\kappa\rfloor + 1} (x \kappa + t_1 + 2)^{-\alpha} \textnormal{d}x.$ 

    Now, let 
    \[
        I := \sum_{s = t_1 + 1}^{t_2 - 1} \exp\bigg[-\frac{\beta_\kappa [(s + 1)^{1 - \alpha} - (t_1 +2)^{1 - \alpha}]}{\kappa(1 - \alpha)} \bigg]
    \]
    so that
    \begin{equation}
    \label{e:M.Bd}
        \bE_{t_1} \p{M_{t_1}^{t_2}} \leq \Cgm \alpha_{t_1} I.
    \end{equation}
    Interpreting $I$ as a Riemann-sum approximation then shows
    \[
        I \leq 1 + \int_{t_1 + 2}^\infty f(x) \textnormal{d}x,
    \]
    where $f(x) = e^{- c (x^{1- \alpha} - \zeta^{1- \alpha})},$ $c = \beta_\kappa/(\kappa( 1- \alpha)),$ and $\zeta = t_1 + 2.$ Next let $u = x^{1-  \alpha} - \zeta^{1 - \alpha}.$ Then, 
    \[
        x = [u + \zeta^{1- \alpha}]^{1/(1 - \alpha)}
    \]
    and, hence, 
    \[
        \textnormal{d}x = \frac{1}{1 - \alpha} [u + \zeta^{1- \alpha}]^{\alpha/(1 - \alpha)}  \textnormal{d}u.
    \]
    Therefore, from \eqref{e:M.Bd},
    \begin{align}
        \bE_{t_1} & \p{M_{t_1}^{t_2}} \nonumber \\
        \leq {} & \Cgm \alpha_{t_1} \left[1 + \frac{1}{1 - \alpha} \int_{0}^\infty e^{-cu} [u + \zeta^{1- \alpha}]^{\alpha/(1 - \alpha)} \textnormal{d}u\right] \nonumber \\
        \leq {} & \Cgm \alpha_{t_1} \left[  1 + \frac{1}{1 - \alpha}[I_1 + I_2] \right], \label{e:I.decomposition}
    \end{align}
    where 
    \begin{align*}
        I_1 := {} & \int_{0}^{\zeta^{1 - \alpha}} e^{-cu} [u + \zeta^{1- \alpha}]^{\alpha/(1 - \alpha)}  \textnormal{d}u\\
        \intertext{and}
        I_2 := {} & \int_{\zeta^{1 - \alpha}}^{\infty} e^{-cu} [u + \zeta^{1- \alpha}]^{\alpha/(1 - \alpha)}   \textnormal{d}u.
    \end{align*}
    
    Now, for $0 \leq u  \leq \zeta^{1 - \alpha},$ we have 
    \[
        u + \zeta^{1 - \alpha} \leq 2\zeta^{1 - \alpha}.
    \]
    Hence, 
    \[
        I_1 \leq \frac{2^{\alpha/(1 - \alpha)}  \zeta^\alpha}{c} = \frac{2^{\alpha/(1 - \alpha)}  \kappa(1 - \alpha) \zeta^\alpha }{\beta_\kappa},
    \]
    where the last relation follows since $c = \beta_\kappa/(\kappa(1 - \alpha).$
    Similarly, 
    \begin{align*}
        I_2 \overset{(a)}{\leq} {} & 2^{\alpha/(1 - \alpha)} \int_{\zeta^{1 - \alpha}}^\infty e^{-cu}u^{\alpha/(1 - \alpha)}  \textnormal{d}u \\
        \overset{(b)}{\leq} {} & 2^{\alpha/(1 - \alpha)} \left(\frac{2- \alpha}{1 - \alpha} \right)^{\alpha/(1 - \alpha) + 2} e^{-c(2 - \alpha)/(1 - \alpha)}  \\
        {} & \times \int_{\zeta^{1 - \alpha}}^\infty u^{-2} \textnormal{d}u \\
        \leq {} & \frac{1}{4} \left(\frac{4 - 2\alpha}{1 - \alpha} \right)^{\alpha/(1 - \alpha) + 2}  \frac{e^{-c(2 - \alpha)/(1 - \alpha)}}{\zeta^{1 - \alpha}} \\
        \leq {} & \frac{1}{4} \left(\frac{4 - 2\alpha}{1 - \alpha} \right)^{\alpha/(1 - \alpha) + 2}  \frac{e^{-\beta_\kappa (2 - \alpha)/\kappa(1 - \alpha)^2}}{\zeta^{1 - \alpha}}
    \end{align*}
    where $(a)$  holds since $u + \zeta^{1 - \alpha} \leq 2u,$ $(b)$ holds since $e^{-cu}u^{\alpha/(1 - \alpha) + 2}$ is maximized at $u = [\alpha/(1 - \alpha) + 2]/c,$ while $(c)$ follows by substituting $c = \beta_\kappa/(\kappa(1 - \alpha)).$
    
    Substituting these bounds in \eqref{e:I.decomposition} and then making use of the facts that $\zeta \geq 1$ and
    \[
        \alpha_{t_1} \zeta^{\alpha} = \frac{1}{(t_1 + 1)^{\alpha}}(t_1 + 2)^{\alpha} \leq e^{\alpha},
    \]
    we get $I \leq \KG,$ as desired.    
\end{proof}

Next, we formally derive \eqref{e:xi_k.Bd}. 

\begin{lemma}\label{lem: nonlin.bound.appendix}
    For all $k\geq 0,$
    \[
        \p{\xi_k} \leq 2\Big(\frac{\CA}{\cstr} + \frac{\Cb}{\cstr^2}\Big)\p{Q_k - \QS}^2.
    \]
\end{lemma}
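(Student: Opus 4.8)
The plan is to prove the quadratic bound by passing from the seminorm $\p{\cdot}$ to the induced monotone norm $\|\cdot\|$, exploiting the three structural facts in Assumption~\ref{a: SA.update.function}: the greedy-type monotonicity (b), the piecewise-constant nature of $b_i, A_i$ near $\QS$ (d), and the invariance $A_i(Q,y)e = e$ on $E$ (a). First I would record the sandwiching relation \eqref{e: nonlin.sandwich}, namely $0 \leq \xi_k \leq [\bhk^{Q_k} - \bhk^{\QS}] + [\Ahk^{Q_k} - \Ahk^{\QS}][Q_k - \QS]$, where both inequalities are coordinatewise; the lower bound is the statement that the greedy policy of $Q_k$ maximizes, while the upper bound follows by writing $[\Ahk^{Q_k} - \Ahk^{\QS}]Q_k = [\Ahk^{Q_k} - \Ahk^{\QS}][Q_k - \QS] + [\Ahk^{Q_k} - \Ahk^{\QS}]\QS$ and discarding the last term, which is $\leq 0$ by (b). Because $\|\cdot\|$ is monotone (Assumption~\ref{a: semi-norm.norm.relation}) and $\p{v}\leq\|v\|$ for every $v$ (Lemma~\ref{lem:semi-norm.norm.coupling}), the sandwich yields $\p{\xi_k} \leq \|\xi_k\| \leq \|\bhk^{Q_k} - \bhk^{\QS}\| + \|[\Ahk^{Q_k} - \Ahk^{\QS}][Q_k - \QS]\|$, i.e.\ \eqref{e: nlin.quadratic.norm.semi-norm}. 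The remaining work is to show each summand is $O(\p{Q_k-\QS}^2)$.

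For the first summand I would argue by cases on whether $\p{Q_k - \QS} < \cstr$. When it is, Assumption~\ref{a: SA.update.function}.(d) forces $\bhk^{Q_k} = \bhk^{\QS}$ exactly, so the term vanishes; when it is not, $\p{Q_k - \QS}^2/\cstr^2 \geq 1$, and since $\|\bhk^{Q_k} - \bhk^{\QS}\| \leq 2\Cb$ by (c), we obtain $\|\bhk^{Q_k} - \bhk^{\QS}\| \leq (2\Cb/\cstr^2)\p{Q_k-\QS}^2$, which is \eqref{e:bhk.bound}. The identical case split applied to the matrix difference gives $\|\Ahk^{Q_k} - \Ahk^{\QS}\| \leq (2\CA/\cstr)\p{Q_k-\QS}$, which is \eqref{e: nlin.seminorm.mat.bound}.

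The crux is the matrix-vector summand: a naive $\|Bx\| \leq \|B\|\|x\|$ would introduce $\|Q_k - \QS\|$, the \emph{full} norm, which can be arbitrarily large in the $E$-direction even when $\p{Q_k-\QS}$ is small. To fix this I would set $e_k := \argmin_{e\in E}\|Q_k - \QS - e\|$, so that $\|Q_k - \QS - e_k\| = \p{Q_k - \QS}$ by Lemma~\ref{lem:semi-norm.norm.coupling}. Since $e_k \in E$, Assumption~\ref{a: SA.update.function}.(a) gives $[\Ahk^{Q_k} - \Ahk^{\QS}]e_k = 0$, so the matrix difference annihilates the $E$-component and $[\Ahk^{Q_k} - \Ahk^{\QS}][Q_k-\QS] = [\Ahk^{Q_k} - \Ahk^{\QS}][Q_k - \QS - e_k]$. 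Submultiplicativity then yields $\|[\Ahk^{Q_k} - \Ahk^{\QS}][Q_k-\QS]\| \leq \|\Ahk^{Q_k} - \Ahk^{\QS}\|\,\p{Q_k - \QS} \leq (2\CA/\cstr)\p{Q_k-\QS}^2$. Adding the two summands gives $\p{\xi_k} \leq 2(\CA/\cstr + \Cb/\cstr^2)\p{Q_k-\QS}^2$, as claimed.

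The main obstacle, as just indicated, is precisely this $E$-direction blow-up in the second term: the seminorm's lack of positive-definiteness means $\p{Q_k-\QS}$ controls only the component of $Q_k - \QS$ transverse to $E$, so a direct operator-norm bound is useless. The projection onto $E$ via $e_k$, combined with the $A_i$-invariance on $E$ from (a), is the essential device that routes the argument through $\p{Q_k-\QS}$ rather than $\|Q_k-\QS\|$; everything else is the routine case analysis enabled by the piecewise-constant structure in (d).
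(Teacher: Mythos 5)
Your proof is correct and follows essentially the same route as the paper's: the same sandwiching relation, the same passage to the induced monotone norm via $\p{\cdot}\leq\|\cdot\|$, the same projection $e_k$ onto $E$ combined with Assumption~\ref{a: SA.update.function}.(a) to replace $\|Q_k-\QS\|$ by $\p{Q_k-\QS}$, and the same indicator-based case analysis giving the constants $2\Cb/\cstr^2$ and $2\CA/\cstr$. The only (welcome) addition is your explicit justification of the upper half of the sandwich by splitting off $[\Ahk^{Q_k}-\Ahk^{\QS}]\QS\leq 0$, a step the paper leaves implicit.
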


\begin{proof}
     From Assumption~\ref{a: SA.update.function}.(b) and the definition of $\xi_k,$ we have the following sandwiching relation:
    \begin{equation}\label{e: sandwich.xi}
        0 \leq \xi_k - [\bhk^{Q_k} - \bhk^{\QS}] \leq [\Ahk^{Q_k} - \Ahk^{\QS}][Q_k - \QS].
    \end{equation}
    Now, Lemma~\ref{lem:semi-norm.norm.coupling} gives a norm $\|\cdot\|$ such that
    \begin{equation}\label{e: semi.equal.norm}
        \p{Q_k - \QS} = \|Q_k - \QS - e_k\|,
    \end{equation}
    where $e_k := \argmin_{e\in E}\|Q_k - \QS - e\|.$ 

    Let $\|\cdot\|_m$ be a chosen monotone norm on $\bR^d,$ and let $c_u,c_\ell>0$ be constants which satisfy \eqref{e: span.semi.norm.monotone.induced.norm}. 
    Then, for all $k\geq 0,$ we have
    \begin{align}\label{e: xi.bound.split}
        &\p{\xi_k} 
        \nonumber\\
        & \overset{(a)}{\leq} \|\xi_k\| \\
        & \overset{(b)}{\leq} c_u\|\xi_k\|_m
        \nonumber\\
        & \overset{(c)}{\leq} c_u\|\bht^{Q_k} - \bhk^{\QS}\|_m + c_u\|\xi_k - [\bht^{Q_k} - \bhk^{\QS}]\|_m
        \nonumber\\
        & \overset{(d)}{\leq}  c_u\| \bht^{Q_k} - \bhk^{\QS} \|_m 
        + c_u\| [\Ahk^{Q_t} - \Ahk^{\QS}]  [Q_k - \QS]\|_m
        \nonumber \\
        & \overset{(e)}{\leq}  \frac{c_u}{c_\ell}\| \bht^{Q_k} - \bhk^{\QS} \| 
        + \frac{c_u}{c_\ell}\| [\Ahk^{Q_t} - \Ahk^{\QS}] [Q_k - \QS]\|
        \nonumber \\
        & \overset{(f)}{=}  \frac{c_u}{c_\ell}\| \bht^{Q_k} - \bhk^{\QS} \| 
        + \frac{c_u}{c_\ell}\| [\Ahk^{Q_t} - \Ahk^{\QS}] [Q_k - \QS - e_k]\|
        \nonumber\\
        & \overset{(g)}{=}  \frac{c_u}{c_\ell}\| \bht^{Q_k} - \bhk^{\QS} \| 
        + \frac{c_u}{c_\ell}\| \Ahk^{Q_t} - \Ahk^{\QS}\| \|Q_k - \QS - e_k\|
        \nonumber\\
        & \overset{(h)}{=}  \frac{c_u}{c_\ell} \left[\|\bht^{Q_k} - \bhk^{\QS} \| 
        + \| \Ahk^{Q_t} - \Ahk^{\QS}\|\ \p{Q_k -\QS} \right], \label{e:xi_k.Intermediate.Bd}
    \end{align}
    where $(a)$ follows from Lemma~\ref{lem:semi-norm.norm.coupling}, $(b)$ follows from \eqref{e: span.semi.norm.monotone.induced.norm}, $(c)$ follows from triangle inequality, $(d)$ follows from the monotonicity of $\|\cdot\|_m$ applied to \eqref{e: sandwich.xi}, $(e)$ follows from \eqref{e: span.semi.norm.monotone.induced.norm}, $(f)$ follows from Assumption \ref{a: SA.update.function}.(a) since $e_k\in E,$ $(g)$ uses the inequality $\|Bx\|\leq \|B\|\|x\|,$ and $(h)$ follows from \eqref{e: semi.equal.norm}.

    Next, from Assumptions~\ref{a: SA.update.function}.(b) and \ref{a: SA.update.function}.(d), we have
    \[
        \|\bhk^{Q_k} - \bhk^{\QS}\| \begin{cases} = 0 & \text{if } \p{Q_k - \QS}<\cstr, \\ \leq 2\Cb &  \text{otherwise,}  \end{cases}
    \]
    and
    \[
        \|\Ahk^{Q_k} - \Ahk^{\QS}\| \begin{cases} = 0 & \text{if } \p{Q_k - \QS}<\cstr, \\ \leq 2\CA &  \text{otherwise.}  \end{cases}
    \]
    Therefore,
    \begin{align}\label{e: norm.diff.b}
    \|\bhk^{Q_k} - \bhk^{\QS}\|  \leq & 2\Cb\ \ones_{\{\p{Q_t -\QS}\geq \cstr\}}
    \nonumber \\
    \leq  & (2\Cb/\cstr^2)\ \p{Q_k - \QS}^2,
    \end{align}
    and
    \begin{align}\label{e: norm.diff.A}
    \|\Ahk^{Q_k} - \Ahk^{\QS}\|  \leq & 2\CA\ \ones_{\{\p{Q_t -\QS}\geq \cstr\}}
    \nonumber \\
    \leq & (2\CA/\cstr)\ \p{Q_k - \QS}.
    \end{align}

    Combining \eqref{e: xi.bound.split}, \eqref{e:xi_k.Intermediate.Bd}, \eqref{e: norm.diff.b}, and \eqref{e: norm.diff.A} concludes the proof of Lemma~\ref{lem: nonlin.bound.appendix}.
\end{proof}

From Assumption~\ref{a: raw.iterate.convergence.rate} and Claim~\ref{claim: burning.period.bound}, we now get the following corollary to Lemma~\ref{lem: nonlin.bound.appendix}:
\begin{corollary}\label{corol: nonlin.bound.appendix}
    For all $k< \tstr,$ we have
    \[
        \p{\xi_k} \leq \Big(\frac{2c_u}{c_\ell}\Big)\Big(\frac{\CA}{\cstr} + \frac{\Cb}{\cstr^2}\Big)\CD^2
    \]
    For all $k\geq\tstr,$ we have
    \[
        \bE\p{\xi_k} \leq \Big(\frac{2c_u}{c_\ell}\Big)\Big(\frac{\CA}{\cstr} + \frac{\Cb}{\cstr^2}\Big)C_Q\tauk\ak.
    \]
\end{corollary}

We now prove Lemma~\ref{thm: PR.nlin.bound}.
 
\begin{proof}[\textbf{Proof of Lemma~\ref{thm: PR.nlin.bound}}]
From \eqref{e:xi_k.Intermediate.Bd}, we have
\begin{equation}
    \bAnlin_T \leq \frac{1}{T}\sum_{k=0}^{T-2} \bE\big[ \p{M_k^T}\ \p{\xi_k}\big].
\end{equation}
Now, 
\begin{align}
    \bE\big[ \p{M_k^T}\ \p{\xi_k}\big] = 
    {} & \bE\bE_k\big[ \p{M_k^T}\ \p{\xi_k}\big]
    \\
    \overset{(a)}{=} {} & \bE\big[ \big(\bE_k\p{M_k^T}\big)\ \p{\xi_k}\big]\\
    \overset{(b)}{\leq} {} & \KG\bE\p{\xi_k},
\end{align}
where $(a)$ follows since $\p{\xi_k}$ is measurable w.r.t $\cF_k,$ while $(b)$ follows from Lemma~\ref{e: matrix.uniform.bound.}. Consequently,
\begin{equation}
    \bAnlin_T \leq \frac{\KG}{T}\sum_{k=0}^{T-2} \bE\p{\xi_k}.
\end{equation}
From Corollary~\ref{corol: nonlin.bound.appendix}, it then  follows that 
\begin{align*}
    \bAnlin_T \leq  \frac{2c_u\KG }{c_\ell T}\Big(\frac{\CA}{\cstr} + \frac{\Cb}{\cstr^2}\Big)\bigg[\tstr\CD^2 + \sum_{t=\tstr}^{T-2}C_Q\taut\at\bigg].
\end{align*}
Using the fact that $\taut \leq \tauT$ for $t\leq T-2,$ we then have
\[
    \frac{1}{T}\sum_{t=\tstr}^{T-2}\taut\at \leq \frac{\tauT}{T}\sum_{t=\tstr}^{T-2}\at \leq \frac{\tauT\ T^{(1-\alpha)}}{(1-\alpha)T} = \frac{\tauT}{(1-\alpha)T^\alpha}.
\]
Separately, $1/T\leq \tauT/T^\alpha$ for all $T>\tstr.$ Therefore,
\[
    \bAnlin_T  
    \leq  \frac{2c_u\KG \max\{\tstr\CD^2, C_Q\}}{c_\ell(1-\alpha)}\Big(\frac{\CA}{\cstr} + \frac{\Cb}{\cstr^2}\Big)\Big(\frac{\tauT}{T^\alpha}\Big).
\]
This concludes the proof of Lemma~\ref{thm: PR.nlin.bound}.
\end{proof}

Finally, we prove Theorem~\ref{thm: SA.convergence}. 

\begin{proof}[\textbf{Proof of Theorem~\ref{thm: SA.convergence}}]
By combining Lemmas~\ref{lem: PR.avg.error.split}, \ref{thm: PR.trans.bound}, \ref{thm: PR.init.bound}, \ref{thm: PR.noise.bound}, and \ref{thm: PR.nlin.bound}, it follows that
\begin{align*}
    \bE\p{\bDelta_T} & \leq \frac{\tstr\CD}{T} + \frac{\xi_\Gamma \p{\Delta_0}}{T}\\
    & + \frac{C^\text{noise}_1\sqrt{\tauT}}{\sqrt{NT}} + \frac{C^\text{noise}_2\ln(T)}{T^\alpha} 
    \\
    & + \frac{2c_u\KG \max\{\tstr\CD^2,C_Q\}}{c_\ell(1-\alpha)}\Big( \frac{\Cb}{\cstr^2}+\frac{\CA}{\cstr}\Big)\left(\frac{\tauT}{T^\alpha}\right).
\end{align*}
Collecting like terms and using $1/T\leq 1/T^\alpha,$ we then get
\begin{align*}
    & \bE\p{\bDelta_T} \leq \frac{C^\text{noise}_1\sqrt{\tauT}}{\sqrt{NT}}  + \bigg[ C^\text{noise}_2\ln(T)
    \\
    & \qquad\qquad   + \tstr\CD + \xi_\Gamma \p{\Delta_0}
    \\
    & \qquad\qquad  + \frac{2c_u\KG \max\{\tstr\CD^2,C_Q\}}{c_\ell(1-\alpha)}\Big( \frac{\Cb}{\cstr^2}+\frac{\CA}{\cstr}\Big)\tauT\bigg]\frac{1}{T^\alpha}.
\end{align*}
Finally, using $\tauT\leq \Big(\frac{2\alpha}{\ln(1/\rho)} + \big|\frac{\ln \CE}{\ln(1/\rho)}\big|\Big)\ln(T)$ along with $1\leq \ln(T)$ completes the proof of Theorem~\ref{thm: SA.convergence}. 
\end{proof}

\section{Q-Learning Applications: Experiments and Proofs of Theorems \ref{thm: avgQ.convergence} and   \ref{thm: expQ.convergence}}

In this section, we prove Theorems~\ref{thm: avgQ.convergence} and \ref{thm: expQ.convergence} by verifying the assumptions \ref{a: SA.update.function}---\ref{a: raw.iterate.convergence.rate} of Theorem~\ref{thm: SA.convergence}, following the proof outline in Section~\ref{s: proof.sketch.Application.RL}. We also present experiments on synthetic examples showing that our parameter-free distributed $Q$-learning algorithms perform comparably to their parameter-dependent counterparts, while improving as the number of agents increases. 

\subsection{Average-reward Q-learning}
\label{appendix: proof.avgQ.converge}
Here we prove that  the synchronous $J$-step average-reward $Q$-learning algorithm (see Section~\ref{s: application.RL}) satisfies the sufficient conditions of Theorem~\ref{thm: SA.convergence}, namely Assumptions~\ref{a: SA.update.function}--\ref{a: raw.iterate.convergence.rate}. 

Recall that synchronous average-reward $Q$-learning can be viewed as a special case of the fixed-point update rule in~\eqref{e: SA.main.update}. In this formulation, $d = \SA$ and $\cY = \cS^{(\cS \times \cA) \times J}$. For each $i$, the process $(y_t^i)_{t \geq 0}$ satisfies $y_t^i(s,a,k) \sim \cP_i(\cdot \mid s,a)$, independently across $k$, $(s,a)$, $t$, and $i$.

Given $Q \in \bR^{\SA}$ and $y \in \cY$, the $J$-step rollout for a state-action pair $(s,a)$ is defined recursively as follows: set $s_0 = s$ and $a_0 = a$, and for $k = 1,\ldots,J$,
\[
s_k = y(s_{k-1}, \pQ(s_{k-1}), k),
\]
where $\pQ$ is the greedy policy with respect to $Q$.

With this notation in place, for $Q \in \bR^{|\cS||\cA|}$ and $y \in \cY$, the local driving function $h_i$ is given by
\begin{multline}\label{appendix: local.function}
    h_i(Q,y)(s,a) := \cR_i(s,a) 
    + \sum_{k=1}^{J-1}\cR_i(s_k^{(s,a)}, \pQ(s^{(s,a)}_k))
    \\
    + \max_{a'} Q(s_J^{(s,a)}, a').
\end{multline}

We now verify that this function $h_i$ and the associated Markov processes $(y_t^i)$ satisfy Assumptions~\ref{a: SA.update.function}--\ref{a: raw.iterate.convergence.rate}.

Condition~\ref{a: SA.noise} holds trivially, since for each agent $i$, the sequence $(y_t^i)_{t \geq 0}$ is IID. Moreover, assuming Condition~\ref{a: SA.update.function}, it follows from \cite[Eq.~B7, Proof of Theorem~2]{zhang2021finite} that Condition~\ref{a: raw.iterate.convergence.rate} is also satisfied. Therefore, it remains to only verify Assumption~\ref{a: SA.update.function}.

We begin by noting that, in the average-reward $Q$-learning setting, the semi-norm $\p{\cdot}$ corresponds to the span semi-norm $\|\cdot\|_\spf$ defined in~\eqref{e:span.semi.norm}, with the associated linear subspace $E$ given by the span of the all-ones vector $\ones$.

Next, we express the local function $h_i$ in the form required by Assumption~\ref{a: SA.update.function}, namely,
\[
    h_i(Q,y) = b_i(Q,y) + A_i(Q,y)Q,
\]
for $Q \in \bR^{\SA}$ and $y \in \cY$. To this end, we introduce a family of stochastic matrices $\{\hcP_k^{\pQ}(y)\}_{k=0}^J$. Given $Q \in \bR^{|\cS||\cA|}$, $y \in \cY$, an initial state-action pair $(s,a)$, and the associated $J$-step rollout $(s_1^{(s,a)}, \ldots, s_J^{(s,a)}),$ the matrix
\begin{equation}\label{e: stoch.matrix.avg}
    \big[\hcP_k^{\pQ}(y)\big](s,a) := \ones^\top_{(s_k^{(s,a)}, \pQ(s_k^{(s,a)}))}.
\end{equation}
Equivalently, for each $(s,a)$ and $(s',a')$,
\[
    \hcP_k^{\pQ}(y)(s,a,s',a') =
    \begin{cases}
        1, & \text{if } s' = s_k^{(s,a)} \text{ and } a' = \pQ(s'),\\
        0, & \text{otherwise.}
    \end{cases}
\]
With this notation, the function $h_i$ in~\eqref{appendix: local.function} can be written as
\[
    h_i(Q,y) = \sum_{k=0}^{J-1} \hcP_k^{\pQ}(y)\,\cR_i \;+\; \hcP_J^{\pQ}(y)\,Q.
\]
Therefore, defining
\[
    b_i(Q,y) := \sum_{k=0}^{J-1} \hcP_k^{\pQ}(y)\,\cR_i,
    \text{ and }
    A_i(Q,y) := \hcP_J^{\pQ}(y),
\]
yields the desired representation of $h_i$.

Now, we show that each $b_i$ and $A_i$ satisfies the conditions \ref{a: SA.update.function}.(a)--\ref{a: SA.update.function}.(d).  Condition~\ref{a: SA.update.function}.(a) follows since $\hcP^{\pQ}_{J}(y)$ is row-stochastic and $E$ is spanned by $\ones.$ Condition~\ref{a: SA.update.function}.(b) follows since $\pQ$ is greedy w.r.t $Q,$ i.e., for each state $s',$ and action $a',$ we have $Q(s',\pQ(s'))\geq Q(s',a').$ 

For condition~\ref{a: SA.update.function}.(c), we assume that the local reward functions are bounded uniformly, i.e., $|\cR_i(s,a)|\leq \Rm$ for each agent $i,$ each state-action pair $(s,a),$ and some fixed $\Rm>0.$ Then, we have
\begin{align*}
    \| b_i(Q,y)\|_\spf &= \Big\|\sum_{k=0}^{J-1}\hcP_{k}^{\pQ}(y)\cR_i \Big\|_\spf 
    \\
    & \overset{(a)}{\leq} \sum_{k=0}^{J-1}\|\hcP_{k}^{\pQ}(y)\cR_i\|_\spf
    \overset{(b)}{\leq} 2\sum_{k=0}^{J-1}\|\hcP_{k}^{\pQ}(y)\cR_i\|_\infty 
    \\
    & \overset{(c)}{\leq} 2\sum_{k=0}^{J-1}\|\cR_i\|_\infty \leq 2J\Rm,
\end{align*}
where $(a)$ uses the semi-norm triangle inequality, $(b)$ follows as $\|\cdot\|_\spf \leq 2\|\cdot\|_\infty,$ and $(c)$ follows since for any row-stochastic matrix $P,$ we have $\|P\|_\infty\leq 1.$ From a similar argument, we have $\|A_i(Q,y)\|_\spf\leq 1.$ Therefore, letting $\Cb:= 2J\Rm$ and $\CA:=1$ suffices for condition~\ref{a: raw.iterate.convergence.rate}.(c).

At last, we verify condition~\ref{a: SA.update.function}.(d). We need to show that there exists $\cstr>0$ such that, for each agent $i,$ $y\in \cY,$ and $Q\in \bR^{\SA},$ we have $A_i(Q,y)=A_i(\QS,y)$ whenever $\|Q-\QS\|_\spf<\cstr.$ We exploit the fact that $\pQ$ is piecewise constant on $\bR^{\SA}$ to partition this space into finitely many conical regions. Each cone $\cC_{\ba}$ is defined w.r.t a unique assignment $\ba\in \cA^\cS$ of actions to states under the greedy policy. In other words, for each $\ba \in \cA^{\cS},$ we define a cone 
\[
    \cC_{\ba} := \left\{ Q\in \bR^{\SA} : \ba(s) = \pQ(s), \text{ for all } s\in\cS \right\}.
\]
It follows that $\{\cC_{\ba}: \ba \in \cA^{\cS}\}$ partitions $\bR^{\SA}$ such that, $\pQ = \ba$ if and only if $Q\in \cC_{\ba}.$ We further assume that $\QS$ does not lie on the boundary of any cone. The desired constant $\cstr$ in \ref{a: SA.update.function}.(d) then depends on how far $\QS$ sits from the boundary of its corresponding cone. 

\noindent We begin by defining the spaces 
\[
    E := \{ c\ones: c\in \bR\} \quad \text{and} \quad E^\perp := \{ Q\in \bR^{\SA} : Q\perp E \}.
\]
Given any $c>0,$ we define the disc of radius $c$ which is centered at $\QS$ and confined to the hyperplane $\QS+E^\perp$ as follows:
\[
    D(\QS, c) := \{ Q \in (\QS+ E^\perp) : \|Q-\QS\|_\infty <c \}.
\]
We make the following claim:
\begin{claim}
    For any $c>0,$ we have $Q\in E + D(\QS,c)$ whenever $\|Q-\QS\|_\spf<c.$
\end{claim}

To see this, note that for every $e\in E$ and $e^\perp\in E^\perp,$ we have $\|e\|_\spf=0$ and $\|e^\perp\|_\infty \leq\|e^\perp\|_\spf\leq 2\|e^\perp\|_\infty.$ Since $E + E^\perp$ forms a direct sum decomposition of $\bR^{\SA},$ given any $Q\in \bR^{\SA},$ we can write $Q-\QS=e+e^\perp,$ for some $e\in E$ and $e^\perp\in E^\perp.$ If $\|Q-\QS\|_\spf<c,$ then
\[
    \|e^\perp\|_\infty \leq \|e^\perp\|_\spf = \|Q-\QS\|_\spf <c.
\]
Hence $\QS+e^\perp \in D(\QS,c)$ and $Q\in E + D(\QS,c).$

Now, we show that $\exists \cstr>0$ such that $D(\QS, \cstr)$ lies in the interior of the cone $\cC_{\bar{a}^*}.$ Since adding $c\ones$ to any vector does not affect its cone, the cylinder $E + D(\QS,\cstr)$ also lies in the interior of $\cC_{\bar{a}^*}.$ Thus, whenever $\|Q-\QS\|_\spf< \cstr,$ we have $Q\in E + D(\QS,\cstr)$ lying in the interior of $\cC_{\bar{a}^*}.$ Consequently, $Q$ and $\QS$ lie in the same cone; and hence, $\pQ=\pi_{\QS}.$ 

Since $\QS$ lies in the interior of $\cC_{\bar{a}^*},$ there exists an open ball 
\[
    B(\QS, \cstr) := \left\{ Q\in \bR^{\SA} : \|Q-\QS\|_\infty < \cstr \right\}
\]
which is centered at $\QS$ and sits in the interior of $\cC_{\bar{a}^*}.$ Taking intersection of this ball with the hyperplane $(\QS+E^\perp)$ produces this desired disc $D(\QS, \cstr).$ 

This completes the verification of condition~\ref{a: SA.update.function}.(d) and hence proves Theorem~\ref{thm: avgQ.convergence}. \qed

\subsection{Asynchronous Discounted Q-learning}
\label{appendix: application.exp}

In this section, we prove Theorem~\ref{thm: expQ.convergence}. We need to show that conditions~\ref{a: SA.update.function}--\ref{a: raw.iterate.convergence.rate} hold for the asynchronous exponentially discounted $Q$-learning algorithm. 

Recall from Section~\ref{s: application.RL}, that this algorithm is a special case of \eqref{e: SA.main.update}---with $d=\SA,$ $ \cY= \cS\times\cA\times\cS,$ and the local function $h_i:\bR^{\SA}\times\cY\to\bR^{\SA}$ defined as 
\begin{multline}\label{appendix: local.function.exp}
    h_i(Q,s,a,s') := Q - Q(s,a)\ones_{(s,a)} 
    \\
    + [\cR_i(s,a) + \gamma \max_{a'} Q(s',a') ]\ones_{(s,a)},
\end{multline}
for each agent $i,$ $Q\in \bR^{\SA},$ and $(s,a,s')\in \cY$---while the local Markov chain $(\yti)$ is defined as $\yti=(s^i_t,a^i_t,s^i_{t+1}),$ where $a^i_t\sim \mu(\cdot|s^i_t),$ and $s^i_{t+1}\sim \cP_i(\cdot| s^i_t, a^i_t),$ for all $t\geq 0.$ 

Condition~\ref{a: SA.noise} states that the Markov chain $(\yti)$ satisfies a geometric mixing property, which is a standard assumption (see discussion below \cite[Assumption 3.1]{chen2021lyapunov}), while condition~\ref{a: raw.iterate.convergence.rate} follows from \cite[Theorem B.1]{chen2021lyapunov}. Therefore, we only need to verify condition~\ref{a: SA.update.function} to prove of Theorem~\ref{thm: expQ.convergence}.

First, we rewrite the local function $h_i$ in form given in Assumption~\ref{a: SA.update.function}. We define vector $b_i(Q,s,a,s')$ and matrix $A_i(Q,s,a,s'),$ for every $Q\in \bR^{\SA}$ and $(s,a,s')\in \cY,$ as 
\begin{align*}
    b_i(s,a,s') &:= \cR_i(s,a)\ones_{(s,a)} 
    \\
    A_i(Q,s,a,s') &:= \bI - \ones_{(s,a)}\big[ \ones_{(s,a)} - \gamma \ones_{(s', \pQ(s'))}\big]^\top.
\end{align*}
Then, $h_i,$ as defined in \eqref{appendix: local.function.exp}, takes the desired form 
\[
    h_i(Q,s,a,s') = b_i(s,a,s') + A_i(Q,s,a,s')Q.
\]
Now we are ready to verify condition~\ref{a: SA.update.function}. Since the semi-norm $\p{\cdot}$ in this case is the norm $\|\cdot\|_\infty,$ (see Section~\ref{s: application.RL}) the space $E$ contains only the zero vector, i.e., $E=\{0\}.$ Thus, condition~\ref{a: SA.update.function}.(a) holds trivially. Next, since 
\[
    A(Q',s,a,s')Q = \big[Q  - Q(s,a) \ones_{(s,a)}\big] + \gamma Q(s',\pi_{Q'}(s')),
\]
condition~\ref{a: SA.update.function}.(b) again holds by the greedy nature of the policy $\pQ.$ Condition~\ref{a: SA.update.function}.(c) holds if we set $\Cb=\Rm$ and $\CA= 2+\gamma;$ this is true because
\begin{multline*}
    \|b_i(s,a,s')\|_\infty 
    \\
    = \|\cR_i(s,a)\ones_{(s,a)}\|_\infty 
    \overset{(a)}{\leq} \Rm\|\ones_{(s,a)}\|_\infty \leq \Rm,
\end{multline*}
and
\begin{align*}
    & \|A_i(Q,s,a,s')\|_\infty 
    \\
    & \overset{(b)}{\leq} \|\bI\|_\infty + \|\ones_{(s,a)}\ones^\top_{(s,a)}\|_\infty + \gamma \|\ones_{(s,a)}\ones_{(s', \pQ(s'))}^\top\|_\infty
    \\
    & \overset{(c}{\leq} 1 + 1 + \gamma,
\end{align*}
where $(a)$ holds by assuming bounded reward functions, $(b)$ holds by the triangle inequality, and $(c)$ holds since $[\ones_{(s,a)}\ones_{(s, a)}^\top]$ and $[\ones_{(s,a)}\ones_{(s', \pQ(s'))}^\top]$ are both row-stochastic matrices. Lastly, condition \ref{a: SA.update.function}.(d) follows from the same partition described in Appendix~\ref{appendix: proof.avgQ.converge}, and defining $\cstr$ as the $\|\cdot\|_\infty$-distance of $\QS$ from the boundary of its cone $\cC_{\bar{a}^*}$ (see Appendix~\ref{appendix: proof.avgQ.converge}). Then, whenever $\|Q-\QS\|<\cstr,$ we have $Q$ and $\QS$ in the same cone; and hence $\pQ=\pi_{\QS}.$

This concludes the proof of Theorem~\ref{thm: expQ.convergence}. \qed

\subsection{Empirical Performance}
\begin{figure*}[!t]
    \centering

    \includegraphics[width=\linewidth]{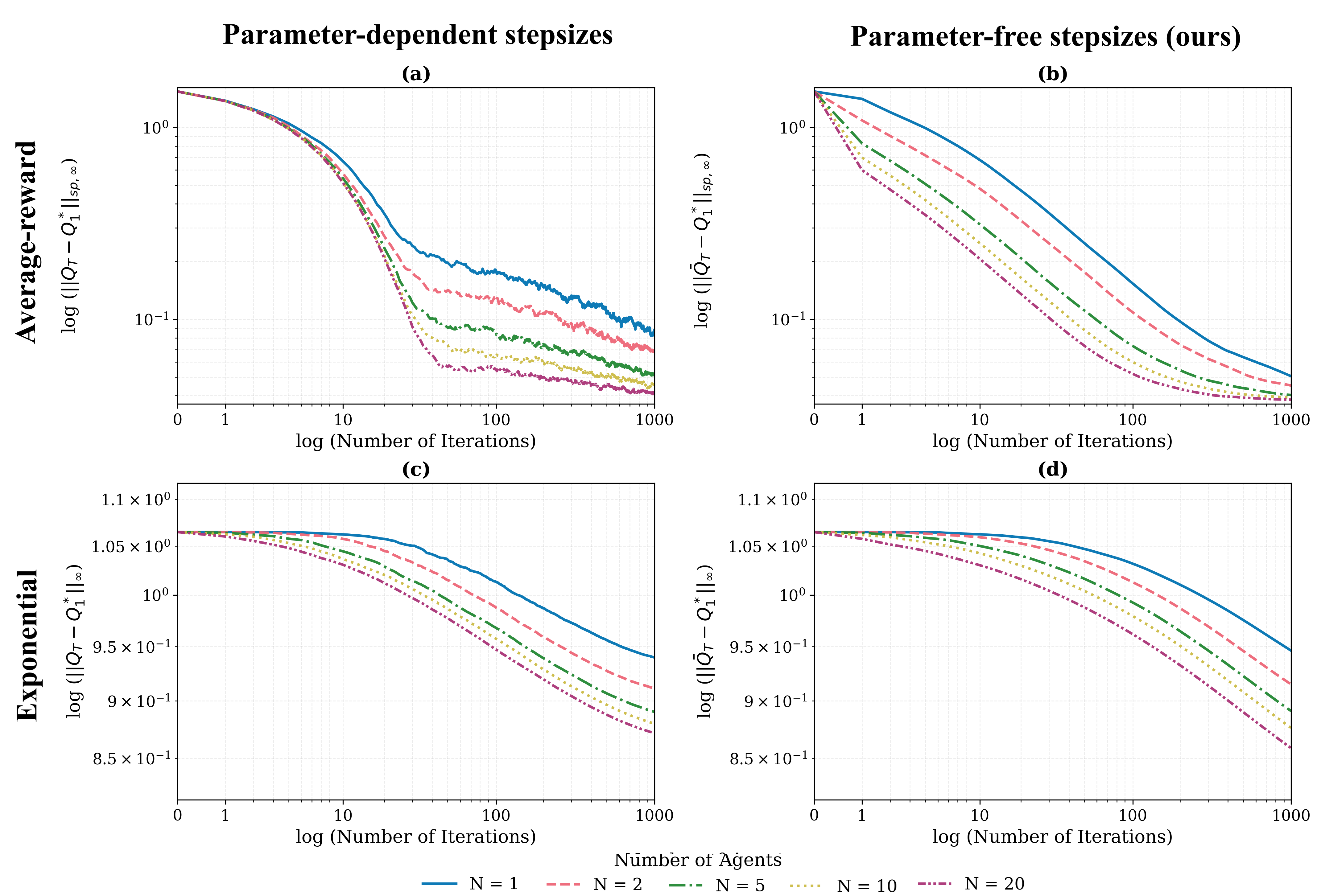}
    
    \caption{Performance of \textbf{distributed $Q$-learning} with $N$ agents. \textbf{Top row:} parameter-dependent (\textbf{left}) versus parameter-free (our work) (\textbf{right}) versions of synchronous distributed average-reward $Q$-learning. \textbf{Bottom row:} parameter-dependent and parameter-free versions of asynchronous distributed  exponentially-discounted $Q$-learning. Each curve shows the error averaged over $50$ independent runs of the algorithm, all initialized identically with $Q_0=\bQ_0=0$. The parameter-dependent versions use the linear stepsize $\alpha_t=c_1/t+c_2$, where $c_1=4/(1-\beta)$ and $c_2=\max\left\{\frac{2.88\log(\SA)}{(1-\beta)^2},3\right\}$, with $\beta=\gamma=0.1$ in the discounted setting and $\beta=0.75$ in the average-reward setting. The parameter-free stepsize is chosen as $\at=(t+1)^{-0.75}$. The error is measured with respect to the fixed point $\QS_1$ of a fixed MDP. The plots show that the iterates $(\bQ_T)$ converge to $\QS_1$ at the desired rate of $\tilde{O}(1/\sqrt{T})$. Moreover, performance improves as $N$ increases. Most importantly, our parameter-free algorithms (\textbf{right}) perform comparably to the parameter-dependent algorithms (\textbf{left}).
    \label{fig: FedQ}}
\end{figure*}

Figure~\ref{fig: FedQ} compares the performance of parameter-dependent and our proposed PR-averaging-based parameter-free versions of distributed average-reward and exponentially discounted $Q$-learning in synthetic settings. In both cases, the empirical behavior closely matches our theory: the error decays steadily with the number of iterations, and this decay becomes faster as the number of agents increases, highlighting the benefit of distribution. More importantly, the parameter-free method tracks the performance of the parameter-dependent baseline remarkably well, despite not relying on problem-specific tuning. These results suggest that the gains predicted by our analysis are not merely asymptotic, but are already visible in finite-time behavior across both the average-reward and discounted settings.

\section*{Acknowledgements}

This work was partially supported by ARO Grant W911NF2310111, Purdue University, and the DST-SERB Overseas Visiting Doctoral Fellowship. In addition, Gugan Thoppe’s research was supported in part by grants from the Walmart Centre for Tech Excellence at IISc, the Indo-French Centre for the Promotion of Advanced Research (CEFIPRA; Project 7102-1), the Kotak IISc AI-ML Centre FinTech Grant, the DST-SERB Core Research Grant (CRG/2021/008330), and the Pratiksha Trust Young Investigator Award. The authors thank Utsav Negi for assistance in generating Figure~\ref{fig: FedQ}.






    



\bibliography{cam_ready}

\end{document}